\newtheorem{theorem}{Theorem}
\newtheorem{lemma}[theorem]{Lemma}
\title{Speeding Up the NSGA-II via Dynamic Population Sizes}
\author{
Benjamin Doerr$^1$
\and
Martin~S. Krejca$^1$\and
Simon Wietheger$^2$\\
\affiliations
$^1$Laboratoire d'Informatique (LIX), CNRS, École Polytechnique, Institut Polytechnique de Paris\\
$^2$Algorithms and Complexity Group, TU Wien\\
\emails
\{firstname.lastname\}@polytechnique.edu,
swietheger@ac.tuwien.ac.at,
}
\DeclareRobustCommand{\NSGAtwo}{NSGA\nobreakdash-II\xspace}
\DeclareRobustCommand{\NSGAthree}{NSGA\nobreakdash-III\xspace}
\DeclareRobustCommand{\SMS}{SMS\nobreakdash-EMOA\xspace}
\newcommand{\SPEA}{SPEA2\xspace}
\newcommand{\jump}{\textsc{Jump}\xspace}
\newcommand{\zerojump}{\textsc{Zero\-Jump}\xspace}
\newcommand{\omm}{\textsc{OMM}\xspace}
\newcommand{\lotz}{\textsc{LOTZ}\xspace}
\newcommand{\ojzj}{\textsc{OJZJ}\xspace}
\newcommand{\ojzjk}{\ensuremath{{\textsc{OJZJ}_k}}\xspace}
\newcommand{\Nmax}{N_{\max}}
\newcommand{\R}{\ensuremath{\mathbb{R}}}
\newcommand{\N}{\ensuremath{\mathbb{N}}}
\newcommand{\EE}{\ensuremath{\mathcal{E}}}
\newcommand{\eps}{\varepsilon}
\let\originalleft\left
\let\originalright\right
\renewcommand{\left}{\mathopen{}\mathclose\bgroup\originalleft}
\renewcommand{\right}{\aftergroup\egroup\originalright}
\newcommand{\set}[1]{\{#1\}}
\definecolor{orange}{RGB}{255,127,0}
\newcommand{\nootherflip}[1][n-1]{\left(1-\tfrac{1}{n}\right)^{#1}}
\DeclarePairedDelimiter\floor{\lfloor}{\rfloor}
\DeclarePairedDelimiter\ceil{\lceil}{\rceil}
\let\oldsqrt\sqrt
\def\hksqrt{\mathpalette\DHLhksqrt}
\def\DHLhksqrt#1#2{\setbox0=\hbox{$#1\oldsqrt{#2\,}$}\dimen0=\ht0
   \advance\dimen0-0.2\ht0
   \setbox2=\hbox{\vrule height\ht0 depth -\dimen0}%
   {\box0\lower0.4pt\box2}}
\renewcommand\sqrt\hksqrt
\DeclareMathOperator{\MEI}{MEI}
\DeclareMathOperator{\MEIE}{MEIE}
\DeclareMathOperator{\EIE}{EIE}
\DeclareMathOperator{\phasel}{Phase}
\newcommand{\ones}[1]{\ensuremath{|#1|_1}}
\begin{document}

\maketitle

\begin{abstract}
    Multi-objective evolutionary algorithms (MOEAs) are among the most widely and successfully applied optimizers for multi-objective problems. However, to store many optimal trade-offs (the {Pareto optima}) simultaneously, MOEAs are typically run with a large population of solution candidates. This slows down the algorithm and renders the choice of the population size a crucial design decision. In this work, we aim to overcome these difficulties by proposing the \emph{dynamic \NSGAtwo}, a variant of the well-known \NSGAtwo that starts with a small initial population and doubles it after a user-specified number~$\tau$ of function evaluations, up to a maximum size of~$N_{\max}$. We prove that the dynamic \NSGAtwo with optimal parameters computes the Pareto front of the \textsc{OneMinMax} benchmark of size~$n$ with high probability in $O(n \log^2 n)$ function evaluations, which is considerably faster than the $\Theta(n^2 \log n)$ runtime of the static \mbox{\NSGAtwo} with optimal parameters. For the \textsc{OneJumpZeroJump} benchmark with gap size~$k$, we show a runtime of $O(n^k \log^2 n)$, improving upon the known runtime of $\Theta(n^{k+1})$. We also propose a variant that uses the initial population size for a longer period and achieves slightly better performance. Finally, we show that a simple concurrent-run strategy turns our dynamic \mbox{\NSGAtwo} variants into parameter-less algorithms that  exceed the above runtimes only by a logarithmic factor and hence still outperform the static \NSGAtwo by a factor of~$\tilde\Omega(n)$.
\end{abstract}

\section{Introduction}

Real-world problems often require the optimization of conflicting objectives, resulting in several incomparable optimal trade-offs, known as \emph{Pareto optima}. One solution concept for such multi-objective optimization problems is to compute several Pareto optima, ideally all of them (the \emph{Pareto front}), and let a decision maker select the final solution.
\emph{Multi-objective evolutionary algorithms (MOEAs)} are a natural choice for this task as they usually work with sets of solutions (\emph{populations}) and, in fact, MOEAs are among the most successful and widely applied approaches in multi-objective optimization \cite{CoelloLV07,ZhouQLZSZ11}.

Recently, the empirical success of popular state-of-the-art MOEAs, such as the \NSGAtwo \cite{DebPAM02}, NSGA-III \cite{DebJ14}, SMS-EMOA \cite{BeumeNE07}, MOEA/D \cite{ZhangL07}, or SPEA2 \cite{ZitzlerLT01}, has been supported by the first mathematical analyses of these algorithms \cite{ZhengLD22,WiethegerD23,BianZLQ23,LiZZZ16,RenBLQ24}.
This not only gave rigorous performance guarantees for MOEAs but also led to deeper insights into their merits and shortcomings.
As a result, theoretically motivated and provably improved MOEAs have been presented, e.g., in \cite{BianZLQ23,DoerrIK25}. We aim at this as well, for the most prominent MOEA, the \NSGAtwo.

When analyzing the mathematical runtime analyses of the \NSGAtwo, we observe an ambivalent role of the population size $N$, the main parameter of this algorithm. On the one hand, the population size must be large enough to represent the Pareto front (and, in fact, a little larger, since the algorithm fails to perfectly efficiently cover the Pareto front) \cite{ZhengD23aij}. On the other hand, since the \NSGAtwo generates and evaluates $N$ solutions per iteration, a large~$N$ slows down the algorithm. This effect was made precise in \cite{DoerrQ23LB}. These two insights, together with the details of the proofs in these works, suggest to design an \NSGAtwo that gradually increases the population size.

\textbf{Our contribution.}
We propose the \emph{dynamic \NSGAtwo} (\Cref{alg:nsga-ii-regular-growth}), which is an \NSGAtwo that starts with a small population of size four and doubles it after a user-specified number $\tau$ of function evaluations, but not beyond a maximum population size $\Nmax$. This second main parameter of the algorithm plays a similar role as the population size $N$ of the classic \NSGAtwo, in particular, it has to be large enough for the algorithm to eventually cover the Pareto front. Since both empirical \cite{KukkonenD06} and mathematical \cite{ZhengD25approx} results have shown that it is preferable to select the next population of the \NSGAtwo sequentially, always based on the current crowding distance at that moment, we use this variant of the \NSGAtwo.

We analyze mathematically the performance of this dynamic \NSGAtwo on two standard benchmarks from the theory of MOEAs. For the OneMinMax benchmark (denoted here by \omm), we show that our algorithm with optimal parameters with high probability finds the Pareto front within $O(n \log^2 n)$ function evaluations (\Cref{thm:uniform_growth}). This is considerably faster than the $O(n^2 \log n)$ best runtime guarantees known for the standard versions of the \NSGAtwo, \mbox{\NSGAthree}, \SMS, \SPEA, and MOEA/D with optimal parameters \cite{ZhengD23aij,DangOSS23gecco,ZhengD24,RenBLQ24,LiZZZ16}, proven to be asymptotically tight for the \NSGAtwo \cite{DoerrQ23LB}. For the OneJumpZeroJump benchmark (\ojzjk) with difficulty parameter $k \in [2 .. \floor{\frac{n}{2}}]$, we prove that the dynamic \NSGAtwo with optimal parameters satisfies a runtime guarantee of $O(n^k \log^2 n)$ function evaluations with high probability (\Cref{thm:uniform_growth_ojzj}).
Except for very large values of $k$, this compares again favorably with the best known runtime guarantee of $O((n - 2k) n^k)$ for the \NSGAtwo, \mbox{\NSGAthree},  \SMS, and \SPEA \cite{DoerrQ23tec,Opris25ijcai,BianZLQ23,RenBLQ24}, which is again tight for the classic \NSGAtwo \cite{DoerrQ23LB}; we are not aware of any runtime analysis of the MOEA/D on \ojzj. Both of our runtime guarantees are independent of the maximum population size $\Nmax$ as long as this is at least slightly larger than the size of the Pareto front.

Noting that the optimal choice of the doubling parameter~$\tau$ is mostly determined by the part of the optimization process working with the initial population size, we also propose a variant of the dynamic \NSGAtwo that allows for a longer initial phase. For this \emph{dynamic \NSGAtwo with longer initial phase}, we set the length of the initial phase in such a manner that it takes as long as all later phases combined until the maximum population size is reached. This significantly increases the time spent with a small population without increasing the asymptotic order of the runtime. In our runtime analysis, we show that this variant allows for a smaller doubling factor (for all but this initial phase), resulting in slightly superior runtimes (\Cref{thm:omm-extended-first-phase,thm:ojzj-extended-first-phase}), e.g., an $O(n \log n)$ runtime on \omm, which is asymptotically optimal in a large class of unbiased black-box algorithms \cite{LehreW12}.

Our results described so far stated the performance for an optimal choice of the doubling parameter~$\tau$. They display a linear dependence on $\tau$ when $\tau$ is at least of a certain size. Below this, the runtime essentially reverts to the one of the classic \NSGAtwo with population size~$\Nmax$. In that sense, a suboptimal choice of $\tau$ is not fatal, but the advantage of the dynamic choice of the population size could decrease or vanish. To relieve the user from manually optimizing $\tau$, we propose a simple concurrent-run strategy (\Cref{alg:concurrent_opt}) that removes the parameter $\tau$ from the algorithm at the price of a runtime that is by a logarithmic factor larger than the runtime obtainable with the optimal choice of~$\tau$ (\Cref{thm:concurrent_omm_ojzj}). This strategy works both for the dynamic \NSGAtwo and its variant with longer initial phase. Since the runtime of the former was independent of $\Nmax$, we obtain here a version of the \NSGAtwo that has no parameters related to the population size.

\begin{table*}
    \centering
    \begin{tabular}{*{3}{l}}
        Algorithm       & \omm                                                                        & \ojzjk ($k \in [2 .. \floor{\frac{n}{2}}]$)                    \\
        \toprule
        static          & $\Theta(N n \log n)$                                                        & $\Theta(N n^k)$                                                \\
                        & $\quad N \geq 4(n+1)$
                        & $\quad N \geq 4(n-2k+3)$                                                                                                                     \\
        \midrule
        dynamic         & $O(\tau \log n)$ (\Cref{thm:uniform_growth})                                & $O(\tau \log n)$ (\Cref{thm:uniform_growth_ojzj})              \\
                        & $\quad\tau \geq C n \ln(n)$                                                 & $\quad\tau \geq C n^k \ln(n)$                                  \\
        \midrule
        \multirow{2}{*}{\makecell[tl]{dynamic with                                                                                                                     \\longer
        initial phase}} & $O(\tau \log( N_{\max})+\Nmax\log n)$ (\Cref{thm:omm-extended-first-phase}) & $O(\tau \log N_{\max})$ (\Cref{thm:ojzj-extended-first-phase}) \\
                        & $\quad\tau \geq C n$                                                        & $\quad\tau \geq C n^k$
    \end{tabular}
    \caption{
        Number of function evaluations for the static \NSGAtwo (\Cref{alg:nsga-ii-classic}), the dynamic \NSGAtwo (\Cref{alg:nsga-ii-regular-growth}), and the dynamic \NSGAtwo with longer initial phase until the Pareto front is covered.
        The results for the static \NSGAtwo on \omm are due to {\protect\cite{ZhengD23aij,DoerrQ23LB}}, and on \ojzjk due to {\protect\cite{DoerrQ23tec,DoerrQ23LB}}, and they state an expected value.
        The other results hold with probability $1 - O(\frac{1}{n})$ and assume that $C \in \R_{> 0}$ is a sufficiently large constant and that $N_{\max} \geq n + 4$.
    }
    \label{tab:results}
\end{table*}

\section{Related Work}

Doubling schemes have been considered for single-objective evolutionary optimization, e.g., \cite{HarikL99,DoerrD20bookchapter}.
To the best of our knowledge,
this work is the first to study a MOEA with dynamic parameters via mathematical means, and our findings heavily rely on the multi-objective nature of the considered problems.
In such, there is little directly related work to be discussed here.

What comes closest to our work are the recent works that study how to run a MOEA with a smaller population size with the help of an archive. An archive is a storage for good solution candidates that do not take part in the evolutionary process.
This way, good solutions can be stored outside the main population of the MOEA, possibly allowing the MOEA to be run with a smaller population size. We refer to \cite{LiLY24} for a recent general survey on this technique and review the two independent theoretical works on this topic.

\cite{BianRLQ24} consider the \NSGAtwo with an archive. They prove that this algorithm optimizes the $\omm$ and $\lotz$ benchmark in $O(n\log n)$ and $O(n^2)$ function evaluations, respectively, when a constant population size is chosen.
To achieve these runtimes, the analysis relies on an intensive use  of one-point crossover. This is natural for a problem like \lotz, but less natural for a problem like \omm, in which there is no particular order of the bits.
We are very skeptical that the \NSGAtwo with archive and constant population size can optimize the \omm or \ojzjk benchmarks without this type of crossover.
All results proven in this work hold if each parent in each iteration has a constant chance to receive mutation only, allowing arbitrary use of crossover otherwise.

\cite{DoerrKNN24} consider the MOEA/D, which by definition uses an archive. Their main result is that the mutation-only MOEA/D with a population size smaller than the Pareto front size has enormous difficulties finding the Pareto front of \omm when bit-wise mutation is used. However, when a heavy-tailed mutation operator (having a higher sampling variance) is used, smaller population sizes are admissible and lead to considerably better runtimes. More precisely, for $N = O(n^{\beta-1})$ the runtime becomes $O(n^\beta \log n)$, where the constant $\beta>1$ is the exponent of the power-law distribution used to define the mutation operator. By choosing $\beta$ small, runtimes of $O(n^{1+\eps})$ can be obtained. 
Building on this work, ~\cite{CorusOY25} showed that the MOEA/D with Fast Contiguous Somatic Hypermutation can optimize \omm in a runtime of order $O(n \log n)$.
The MOEA/D with constant population size and one-point crossover was shown to optimize \omm in time $O(n \log n)$ in~\cite{HuangZCHLX21}.

From these results and our understanding of the proofs, we tend to believe that working with a growing population size is slightly preferable to an archive. Since the solutions in the archive can never be used to generate new solutions, there is a considerable risk (as proven in the bit-wise mutation setting regarded in \cite{DoerrKW24}) that the few solutions in the population are not sufficient to reach all solutions on the Pareto front. That said, a dynamic population size requires choosing the right growth schedule for the population.

\section{Preliminaries}

Let~$\N$ denote the positive natural numbers, and $\N_0 \coloneqq \set{0} \cup \N$. For $i, j\in \N_0$, let $[i .. j] \coloneqq [i, j] \cap \N_0$, $[i] \coloneqq [1 .. i] $, and $[i]_0 \coloneqq [0 .. i]$. We abbreviate $\log_2(x)$ to $\log(x)$ and $\log_e(x)$ to $\ln(x)$.

We consider pseudo-Boolean \emph{maximization}, that is, for a given $n \in \N$, we study functions $f\colon \{0, 1\}^n \to \R^2$.
We call each $x \in \{0, 1\}^n$ an \emph{individual} and $f(x) \eqqcolon \bigl(f_1(x), f_2(x)\bigr)$ its \emph{objective value}.
Furthermore, for all $x \in \{0, 1\}^n$, we denote the number of ones of~$x$ by $\ones{x}$.

We compare objective values $u, v \in \R^2$ via the \emph{domination} partial order~$\succeq$.
We say \emph{$u$ weakly dominates~$v$} ($u \succeq v$) if $u_1 \geq v_1$ and $u_2 \geq v_2$; and~$u$ and~$v$ are \emph{incomparable} if nei\-ther $u \succeq v$ nor $v\succeq u$.
If and only if $u \succeq v$ and $u \neq v$, we say \emph{$u$ strictly dominates~$v$} ($u \succ v$).
We use domination also for individuals, referring to domination of their objective values.

For a function~$f$, we call each $x \in \{0, 1\}^n$ \emph{Pareto-optimal} if and only if no $y \in \{0, 1\}^n$ satisfies $f(y) \succ f(x)$.
For each Pareto-optimal $x \in \{0, 1\}^n$, we call~$f(x)$ a \emph{Pareto optimum}.
The \emph{Pareto front} is the set of all Pareto optima.

\subsubsection{The \textsc{OneMinMax} Benchmark}

We consider the \textsc{OneMinMax} (\omm) benchmark function \cite{GielL10}, which is the most commonly studied benchmark in the theory of MOEAs.
\omm features two conflicting objectives: the first returns the number of zeros in a given individual, the second its number of ones.
Formally, we have $\omm\colon \{0, 1\}^n \to [n]_0^2$ with $x \mapsto (n - \ones{x}, \ones{x})$.

Due to the conflicting objectives, all individuals are Pareto-optimal, and the Pareto front is $\{(i, n - i) \mid i \in [n]_0\}$.

\subsubsection{The \textsc{OneJumpZeroJump} Benchmark}

As a second benchmark function, we consider \textsc{OneJumpZeroJump} (\ojzjk) \cite{DoerrZ21aaai}, which is based on the commonly studied single-objective benchmark \jump \cite{DrosteJW02} benchmark.
Similar to \omm, \ojzjk features two conflicting objectives that try to maximize the number of ones and zeros, respectively. However, given a hardness parameter $k \in [\floor{\frac{n}{2}}]$, both objectives have low values for numbers of ones (zeros) which are at a distance of~$1$ to~$k$ to the maximum number~$n$, requiring the population to jump across this valley of small objective values.
Formally, we have $\ojzjk\colon \{0, 1\}^n \to [n+k]^2_0$ with $x \mapsto (\jump_k(x), \zerojump_k(x))$, where
\begin{align*}
    \jump_k(x) = \begin{cases}
                     |x|_1 +k & \text{if } |x|_1 \le n-k \text{ or } |x|_1 = n; \\
                     n-|x|_1  & \text{else;}
                 \end{cases}
\end{align*}
and $\zerojump_k$ is like $\jump_k$, swapping ones and zeros.

Individuals in the valley of low objective values are not Pareto-optimal, and the Pareto front is $\{(n+k,0), (0,n+k)\}\cup \{(i+k, n - i+k) \mid i\in \N, k \le i \le n-k\}$. We call the objective values $(n+k,0)$ and $(0,n+k)$ the \emph{outer} Pareto front, and the remaining Pareto front the \emph{inner} Pareto front.

Throughout this work, when we mention \omm or \ojzjk, we assume that the problem size $n \in \N$ is given.
Our mathematical statements aim at asymptotics in terms of this~$n$, implying that we may assume~$n$ to be sufficiently large.

\subsubsection{Runtime}

During the run of an MOEA, we assume that the objective value of an individual is evaluated exactly once---at the point of its creation.
Hence, each potential copy of an individual is also evaluated exactly once.
The \emph{runtime} of an MOEA is the (random) number of function evaluations until the current multi-set of individuals that the MOEA maintains (its \emph{population}) \emph{covers} the Pareto front for the first time, that is, until the objective values of the population contain the Pareto front.

\subsubsection{Empty Intervals}

Given a population~$P$ of individuals not covering the Pareto front of the chosen benchmark, we are interested in the consecutive objective values of the Pareto front that are not in~$P$.
We call these intervals \emph{empty}, defining them formally below.
The classic \NSGAtwo that uses the current crowding distance (details in Section~\ref{sec:static_nsgaII}) has the useful property that empty intervals (for \omm) do not increase over time and even reduce quickly in time, if the population size is large enough and if the algorithm's parent population contains the all-zeros bit string ($0^n$) as well as the all-ones bit string ($1^n$).
Note that the objective values of~$0^n$ and~$1^n$ are the two extreme points of the Pareto front of both \omm and \ojzjk.
Considering such populations is vital for our runtime analysis.

In the interest of clarity, we here only define empty intervals for \omm and defer the definition for \ojzjk to the respective section.
In order to allow the comparison of empty intervals among different populations, we follow the definition of Zheng and Doerr~\shortcite{ZhengD25approx} and define a fixed number of~$n$ empty intervals for any population, based on the points $\{i - 0.5 \mid i \in [n]\}$ in the space~$[0, n]$ of the first objective of \omm.
Formally, let~$P$ be a population of individuals containing at least~$0^n$ and~$1^n$.
For all $i \in [n]$, the \emph{$i$-th empty interval~$I_i$} (of~$P$ for~\omm) is defined as
\begin{align*}
    I_i = [\max & \set{f_1(x)\mid x\in P \land \omm_1(x)\le i-0.5}.. \\
    \min        & \set{f_1(x)\mid x\in P \land \omm_1(x)\ge i-0.5}].
\end{align*}
By definition, the objective values not in~$P$ are given by the interior points of the empty intervals.
For example, the empty interval $[0 .. 2]$ implies that the objective value $(1, n - 1)$ is not in the population. Different empty intervals can coincide.

Furthermore, given a population~$P$ of individuals containing at least~$0^n$ and~$1^n$, we define the \emph{maximum empty interval} (MEI) as the maximum length of its empty intervals, that is,
\begin{equation*}
    \MEI(P) = \max \set{|I_i| \mid i \in [n]},
\end{equation*}
where $|[a .. b]| = b - a$ denotes the length of an interval $[a .. b]$.
Note that if~$P$ covers the Pareto front, then $\MEI(P) = 1$.

\section{The Classic NSGA-II as a Baseline}\label{sec:static_nsgaII}

\begin{algorithm2e}[t]
    \caption{
        The classic (static) \NSGAtwo~\protect\cite{DebPAM02} with population size $N \in \N$ and selection based on current crowding distance \protect\cite{ZhengD25approx}, maximizing a given pseudo-Boolean function $f\colon \{0, 1\}^n \to \R^2$.
    }
    \label{alg:nsga-ii-classic}
    $t \coloneqq 0$\;
    $P_0 \coloneqq$ draw~$N$ individuals uniformly at random\;
    \While{termination criterion not met}{%
        $Q_t \coloneqq$ generate offspring population with size~$N$\;
        $P_{t + 1} \coloneqq$ select~$N$ individuals from $P_t \cup Q_t$ by non-dominated sorting followed by the current crowding distance (with uniform tie-breaker)\;
        $t \coloneqq t + 1$\;
    }
\end{algorithm2e}

The \emph{non-dominated sorting genetic algorithm~II} \cite{DebPAM02} (\NSGAtwo, \Cref{alg:nsga-ii-classic}) is the most popular and widely applied MOEA.
As such, we use it as a baseline for our dynamic versions.
However, we note that we use a slight improvement proposed by \cite{ZhengD25approx}, which applies the \emph{current crowding distance}, which handles ties in a manner that results in a more equidistant spread of solutions.

The \NSGAtwo maintains a multi-set of promising individuals (the \emph{parent population}) of a given size~$N$.
The initial population is generated uniformly at random and updated iteratively.
In each iteration, the \NSGAtwo creates an \emph{offspring population} of size~$N$ by user-defined means and then selects from the combined parent and offspring population the~$N$ most promising individuals for the next iteration.
This selection mechanism ranks individuals hierarchically: first, individuals are selected by their rank assigned via non-dominated sorting.
Ties are handled via the (current) crowding distance, and any remaining ties are handled uniformly at random.

\textbf{Computing the offspring population.}
The \NSGAtwo creates in total~$N$ offspring by repeatedly picking a certain number of individuals from the parent population and then creating modifications of them.
If it picks a single parent, the process of creating an offspring is called \emph{mutation}.

We allow for a variety of ways to select parents and to create offspring.
The only restriction we make is that there is a constant $q \in (0, 1]$ such that for each possible parent population~$P$ and each parent $x \in P$, the probability to select~$x$ at least once during the computation of the offspring population and to apply \emph{bit-wise mutation} to~$x$ is at least~$q$.

Given a parent $x \in \{0, 1\}^n$, bit-wise mutation creates an offspring $y \in \{0, 1\}^n$ by first copying~$x$ and then flipping each bit of~$y$ independently with probability~$\frac{1}{n}$.

\textbf{Non-dominated sorting.}
Non-dominated sorting assigns each individual~$x$ in a given (combined) population~$R$ a \emph{rank} that determines by how many individuals~$x$ is strictly dominated.
Individuals with the same rank are grouped into the same \emph{non-dominated front}, resulting in a partition $(F_i)_{i \in [0 .. k - 1]}$ of~$R$ (of some size~$k$).

The \NSGAtwo always attempts to add entire fronts in increasing order to its next parent population, as the individuals per front are incomparable.
Once at least~$N$ individuals are chosen, the algorithm does not add any further fronts.
Formally, $i^* = \min \{i \in [0 .. k - 1] \mid \sum_{j \in [0 .. i]} F_i \geq N\}$ is the \emph{critical rank}.
The fronts up to $i^* - 1$ are always part of the next parent population.
If adding~$F_{i^*}$ results exactly in a population size of~$N$, the selection process for the next parent population terminates.
Otherwise, the \NSGAtwo selects the best $N - |\bigcup_{j \in [0 .. i^* - 1]} F_i|$ individuals from~$F_{i^*}$ based on their \emph{crowding distance}, where higher values are preferable.

\textbf{Crowding distance.}
The crowding distance of each individual $x \in F_{i^*}$ is the sum of the crowding distance of~$x$ \emph{per objective}.
For each objective $j \in \{1, 2\}$, let $(y_k)_{k \in [|F_{i^*}|]}$ denote the individuals of~$F_{i^*}$ sorted in increasing order with respect to objective~$j$.
If~$x$ is in the first or last position, its crowding distance for objective~$j$ is positive infinity.
Otherwise, its crowding distance for objective~$j$ is the normalized distance in objective value to its direct neighbors, that is, it is $\bigl(f_j(y_{k^* + 1}) - f_j(y_{k^* - 1})\bigr) / \bigl(f_j(y_{|F_{i^*}|}) - f_j(y_1)\bigr)$.

The \emph{current} crowding distance \cite{ZhengD25approx} is a modification of the classic crowding distance that aims at decreasing the maximum distances of the resulting population.
The current crowding distance achieves this by iteratively removing an individual~$x$ with the smallest (classic) crowding distance and then recomputing all (classic) crowding distances\footnote{Note that~$x$ has at most four neighbors in its objectives.}, until the desired population size is achieved.

\section{The Dynamic NSGA-II}

Based on the classic, static \NSGAtwo (\Cref{alg:nsga-ii-classic}), we propose the \emph{dynamic \NSGAtwo} (\Cref{alg:nsga-ii-regular-growth}), which starts with an initial population of size~$4$ and attempts to double it periodically after a user-defined number $\tau \in \N$ of function evaluations.
An attempt is successful if and only if the current population size is less than the user-defined maximum population size $N_{\max} \in \N$.
We call the consecutive function evaluations between attempts \emph{phases}, noting that phase~$0$ starts at the beginning of the algorithm run and ends at the end of the iteration in which the population size is attempted to be doubled for the first time.
Moreover, note that the number of phases is unbounded although the number of doublings is not.

\begin{algorithm2e}[t]
    \caption{
        The dynamic \NSGAtwo with maximum population size $N_{\max} \in \N$ and doubling parameter $\tau \in \N$, maximizing a given pseudo-Boolean function $f\colon \{0, 1\}^n \to \R^2$. Our only assumption on the offspring generation is that each parent has a probability of at least a constant $q \in (0, 1]$ to be selected for generating an offspring via bit-wise mutation.
    }
    \label{alg:nsga-ii-regular-growth}
    $t \coloneqq 0$, $N \coloneqq 4$, $w \coloneqq 0$\;
    $P_0 \coloneqq$ draw~$N$ individuals uniformly at random\;
    \While{termination criterion not met}{%
        $Q_t \coloneqq$ generate offspring population with size~$N$\;
        $w \coloneqq w + N$\;
        \lIf{$w \geq \tau$}{%
            $w \coloneqq 0$, $N \coloneqq \max \{2 N, N_{\max}\}$
        }
        $P_{t + 1} \coloneqq$ select~$N$ individuals from $P_t \cup Q_t$ by non-dominated sorting followed by the current crowding distance (with uniform tie-breaker)\;
        $t \coloneqq t + 1$\;
    }
\end{algorithm2e}

It is important that we attempt doublings after~$\tau$ function evaluations and not iterations.
If we double the population size after a fixed number of iterations, then, since the work done per iteration is proportional to the population size, relatively little work is done with small population sizes. This risks losing any advantage from a dynamic value for the population size.
By counting function evaluations, we ensure that each phase receives roughly the same computational budget.

\paragraph{Performance Guarantees for \omm}

We start by proving the following bound for the \omm benchmark. For the optimal value of $\tau$, our runtime guarantee is $O(n\log^2 n)$, which is considerably faster than the $\Theta(n^2 \log n)$ known for the classic \NSGAtwo with optimal parameters \cite{ZhengD23aij,DoerrQ23LB}. We note that all slower doubling schedules with $\tau = o(n^2)$ still lead to an asymptotic improvement over the static \NSGAtwo. Likewise, a too small value of~$\tau$ is not detrimental, but reverts to the $O(\Nmax n \log n)$ performance of the classic \NSGAtwo.

\begin{theorem}\label{thm:uniform_growth}
    Consider the dynamic \NSGAtwo (Algorithm~\ref{alg:nsga-ii-regular-growth}) optimizing $\omm$
    with $\Nmax \ge n+4$.
    Let~$T$ denote the number of iterations until the population covers the Pareto front, and let~$F$ denote the number of function evaluations in the first~$T$ iterations. If $\tau \ge 64en\ln(n)/q$, then with probability at least $1 - O(1/n)$, we have $T = O(\tau)$ and $F = O(\tau\log n)$.

    For smaller values of $\tau$, we have $F = O(\Nmax n \log n)$ both in expectation and with probability $1 - O(1/n)$, the same for the static \NSGAtwo with population size~$\Nmax$.
\end{theorem}

We note that we did not optimize the constants in \Cref{thm:uniform_growth}.
Remarkably, for sufficiently large~$\tau$, both the bounds on~$T$ and~$F$ do not depend on~$N_{\max}$, as the dynamic \NSGAtwo covers the entire Pareto front with high probability once its population size is in the order of the size of the Pareto front, that is, order~$n$.
This makes the dynamic \NSGAtwo very robust in terms of choosing~$N_{\max}$, in particular, since a population size of $n + 1$ is required in order to cover the entire Pareto front, and our lower bound of $n + 4$ almost matches this required minimum.
Moreover, since we cover all possible cases for~$\tau$, we see that the dynamic \NSGAtwo is also very robust in terms of choosing the doubling parameter, being better or as good as the static \NSGAtwo for any $\tau = O(n^2)$.

Moreover, we note that the arguments for small~$\tau$ in \Cref{thm:uniform_growth} also apply to the failure case with probability $O(1/n)$. Hence if the dynamic \NSGAtwo fails to display the desired behavior (which happens with probability $O(1/n)$), then the runtime reverts to the classic $O(\Nmax n \log n)$ number of function evaluations, both in expectation and with probability $1 - O(1/n)$. We note further that by raising the constant in the requirement on $\tau$, all failure probabilities can be reduced to $n^{-c}$ for any desired constant $c$. We omit the details.

We prove \Cref{thm:uniform_growth} in two parts.
First, we show that, with high probability, the dynamic \NSGAtwo finds the extreme solutions~$0^n$ and~$1^n$ during phase~$0$.
Afterward, we show that the MEI roughly halves, with high probability, in each phase.

For the first part, we apply a result by \cite{ZhengD25approx}, which holds in expectation, whereas we require one that holds with high probability. We obtain such a result by following essentially the analysis in~\cite{ZhengD25approx} but adding a tail bound for harmonic sums of geometric random variables from \cite[Theorem~1.10.35]{Doerr20bookchapter}. Since the proof holds also for the \NSGAtwo with classic crowding distance and for arbitrary population sizes (of at least four), we formulate the result in this general way.

\begin{lemma}\label{lem:nsgaII_omm_outwards}
    Consider the \NSGAtwo (\Cref{alg:nsga-ii-classic}) with arbitrary initialization,  with survival based on the classic or current crowding distance, and with arbitrary population size $N \ge 4$. Assume that in each iteration, each parent with probability at least $q \in (0, 1]$ is chosen to generate an offspring via bit-wise mutation.
    Let $T$ denote the number of iterations until in this algorithm, when optimizing \omm, the population contains $1^n$ and $0^n$.
    Then $T\le 2e n\ln(n) / q$ with probability at least $1-\frac{2}{n}$.
\end{lemma}

For the second part, we rely on the following lemma showing that the MEI quickly reduces once~$0^n$ and~$1^n$ are found. This result and its proof are similar to \cite[Lemmas~$11$ and~$12$]{ZhengD25approx}, except that we need a more detailed result and a more precise tail bound, since we apply this statement for each phase separately (and thus also for phases of few iterations, where the strong concentration behavior is weaker).

\begin{lemma}\label{lem:nsgaII_omm_inwards}
    Consider a run of the \NSGAtwo (\Cref{alg:nsga-ii-classic}), generating the next population in any way that has the property that any parent has a probability of at least $q \in (0, 1]$ to be selected a parent of a bit-wise mutation, and using the current crowding distance for the selection of the next population.
    Consider some iteration $t\in \N_0$ of this algorithm optimizing \omm. Let $P_t$ be arbitrary except that $\set{0^n,1^n}\subseteq P_t$.
    Let $m,m'\in \N$ such that $\max\set{\frac{2n}{N-3},1} \le m'$ and $\MEI(P_t) \le m$.
    Let $\delta \in \N$ such that $\delta \ge  4 e(m-m')/q$. Then $\Pr[\MEI(P_{t+\delta}) \le m'] \ge 1 - n \exp(-\frac{\delta q}{16e})$.
\end{lemma}

\paragraph{Performance Guarantees for \ojzjk}

\begin{theorem}\label{thm:uniform_growth_ojzj}
    Consider the dynamic \NSGAtwo (Algorithm~\ref{alg:nsga-ii-regular-growth}) optimizing $\ojzjk$
    with $k \in [2 .. \floor{\frac{n}{2}}]$ and $\Nmax \ge n+4$.
    Let~$T$ denote the number of iterations until the population covers the Pareto front, and let~$F$ denote the number of function evaluations in the first~$T$ iterations. If
    $\tau \ge 4e (n^{k} + 8n^2) \ln(n)/q$,
    then with probability at least $1 - O(1/n)$, we have $T = O(\tau)$ and $F = O(\tau\log n)$.

    For smaller values of $\tau$, we have $F = O(\Nmax n^k \log n)$ with probability $1 - O(1/n)$ and $E[F]=O(\Nmax n^k)$, both the same as for the static \NSGAtwo with population size~$\Nmax$.
\end{theorem}

For an optimal choice of~$\tau$, the resulting runtime guarantee of $O(n^k \log^2 n)$ is considerably faster than the $\Theta(n^{k + 1})$ bound for the static \NSGAtwo \cite{DoerrQ23tec,DoerrQ23LB}, and it remains faster for any such choice with $\tau = o(n^{k + 1} / \log n)$.
Moreover, a too small choice of~$\tau$ just results in the same bound as for the static \NSGAtwo.

We note that, in comparison to our result for \omm (\Cref{thm:uniform_growth}), our speed-up is slower by a factor of $O(\log n)$, which is a consequence of our results holding with high probability.
The final steps in the analysis of \ojzjk usually wait for a mutation to occur that flips exactly~$k$ bits.
In expectation, this takes $O(n^k)$ iterations (with each iteration requiring~$N_{\max}$ function evaluations).
However, since this event follows a geometric distribution, it is not strongly concentrated around its expectation and requires and additional factor of $O(\log n)$ in order to hold with high probability.

The details of our proof for \Cref{thm:uniform_growth_ojzj} are in the appendix.
However, the outline of our proof strategy is very similar to that for \omm.
The main difference is that we first wait until we find the optima of the outer Pareto front, which requires in either direction a mutation that flips exactly~$k$ bits at once.
This happens with high probability for the stated choice of~$\tau$.
During the next phase, we wait to reach the outermost optima of the \emph{inner} Pareto front.
This requires for either direction a mutation that flips at least~$k$ \emph{arbitrary} bits, which is easier than the first phase.
From then on, the problem essentially boils down to \omm.

\section{Using a Longer Initial Phase}

As the proofs of the results in the previous section show, the initial phase of the dynamic \NSGAtwo is crucial as here the extreme points of the Pareto front are found. To accommodate for this, we propose a dynamic \NSGAtwo with longer initial phase. For all but the initial phase, it works as the dynamic \NSGAtwo (\Cref{alg:nsga-ii-regular-growth}), doubling its population size after a fixed budget of $\tau \in \N$ fitness evaluations.
For the initial phase, we use a budget that is roughly equal to the budget of all following phases (until the maximum population size~$N_{\max}$ is reached), namely $\ceil{\log(N_{\max}/4)} \tau$ function evaluations.
By this, the dynamic \NSGAtwo with longer initial phase spends more time in the very efficient phase with population size four, but this without increasing the asymptotic runtime. As we shall see, this allows to use smaller values for~$\tau$, leading to better runtimes.

We prove runtime guarantees for the dynamic \NSGAtwo with longer initial phase on the $\omm$ and $\ojzjk$ benchmarks in \Cref{thm:omm-extended-first-phase,thm:ojzj-extended-first-phase}, respectively.
For optimal parameter values, these runtime guarantees are better by a factor of~$O(n)$ compared to the best known bounds for static \NSGAtwo, in particular, they are better by a factor of $O(\log n)$ than the guarantees for optimal parameters for the dynamic \NSGAtwo.
However, the runtime now always depends on~$N_{\max}$, due to the choice of the budget for phase~$0$.

\paragraph{Performance Guarantees for \omm}

For optimal parameters, we show a runtime of $O(n \log n)$ function evaluations, which is considerably faster than the $\Theta(n^2 \log n)$ runtime guarantee for the static \NSGAtwo.
We note that no {unbiased} mutation-only black-box algorithm can find the all-ones string faster than in $\Theta(n \log n)$ time \cite{LehreW12}, so it is hard to imagine that a general-purpose heuristic can solve the \omm problem asymptotically faster than this.

\begin{theorem}\label{thm:omm-extended-first-phase}
    Consider the dynamic \NSGAtwo with longer initial phase optimizing $\omm$
    with $\Nmax \ge n+4$.
    Let~$T$ denote the number of iterations until the population covers the Pareto front, and let~$F$ denote the number of function evaluations in the first~$T$ iterations. If $\tau \ge 16en/q$, then with probability at least $1 - O(1/n)$, we have
    \begin{align*}
        T = O(\tau \log N_{\max}) \ \textrm{and} \
        F = O(\tau \log N_{\max} + \Nmax \log n).
    \end{align*}
    For smaller values of $\tau$, we have $F = O(\Nmax n \log n)$ both in expectation and with probability $1 - O(1/n)$, the same for the static \NSGAtwo (\Cref{alg:nsga-ii-classic}) with population size~$\Nmax$.
  
\end{theorem}

The proof of \Cref{thm:omm-extended-first-phase} is similar to the one of \Cref{thm:uniform_growth}, noting that the budget for phase~$0$ is of the same order.
Afterward, the budget for the dynamic \NSGAtwo with longer initial phase is slightly smaller, which is why we apply \Cref{lem:nsgaII_omm_inwards} only until the MEI is $O(\log n)$.
We then wait until the algorithm attains a population size linear in~$n$ and show via \Cref{lem:nsgaII_omm_inwards} that the algorithm finds all remaining Pareto optima within the next $O(\log n)$ phases.

\paragraph{Performance Guarantees for \ojzjk}

For optimal parameters, we show a runtime of $O(n^k \log n)$.
For small values of~$k$, this is close to the general $\Omega(n^k)$ lower bound when relying solely on bit-wise mutation to flip exactly~$k$ bits.

\begin{theorem}\label{thm:ojzj-extended-first-phase}
    Consider the dynamic \NSGAtwo with longer initial phase optimizing $\ojzjk$ with $k \in [2 .. \floor{\frac{n}{2}}]$
    and $\Nmax \ge n+4$.
    Let~$T$ denote the number of iterations until the population covers the Pareto front, and let~$F$ denote the number of function evaluations in the first~$T$ iterations. If $\tau \ge 4e (2n^{k}+8n^2) /q$, then with probability at least $1 - O(1/n)$, we have
    \begin{align*}
        T = O(\tau \log N_{\max}) \quad \textrm{and} \quad
        F = O(\tau \log N_{\max}).
    \end{align*}
    For smaller values of $\tau$, we have $F = O(\Nmax n^k \log n)$ with probability $1 - O(1/n)$ and $E[F]=O(\Nmax n^k)$, both the same as for the static \NSGAtwo (\Cref{alg:nsga-ii-classic}) with population size~$\Nmax$.
\end{theorem}

The proof strategy for \Cref{thm:omm-extended-first-phase} is similar to the one for \Cref{thm:uniform_growth_ojzj}.
As the first phase is now longer by a factor of $\Theta(\log N_{\max}) \geq \Theta(\log n)$, a value of $\tau = \Theta(n^k)$ is sufficient to find with high probability the outer Pareto front.
Afterward, finding the outermost Pareto optima on the inner front may have a smaller budget than in the case with the dynamic \NSGAtwo, but since it is easier to go from the outer front to the inner front than vice versa, the slightly reduced budget does not affect the time to find the outermost solutions on the inner front.
Last, we deal essentially with an \omm instance.

\section{Choosing the Phase Length Automatically}

To relieve users from manually select a suiting value for the phase length~$\tau$ of the dynamic \NSGAtwo and the dynamic \NSGAtwo with longer initial phase, we propose a framework (\Cref{alg:concurrent_opt}) that concurrently runs multiple instances of the dynamic variant with distinct values for~$\tau$, namely all powers of two.
For each such $\tau = 2^i$, we consider an instance $A_i$ of the dynamic variant.
The framework repeatedly picks an instance $A_i$ and runs its next phase. If an instance was picked before, we continue with the process as we last left it.
Each time, we pick an instance~$A_i$ such that the total number of function evaluations spend on $A_i$ \emph{after} the potential next phase is minimal among all instances, breaking ties arbitrarily.

Formally, let $\phasel(N_{\max}, i, \phi)$ denote the number of function evaluations in phase $\phi$ of the dynamic \NSGAtwo variant with parameters $N_{\max}$ and $\tau=2^i$. Then, at the beginning of each iteration of the \textbf{while}-loop, $r_i$ equals the number of function evaluations that instance $A_i$ received.
Formally, for the dynamic \NSGAtwo and $N_{\phi} = \min \{4 \cdot 2^{\phi}, N_{\max}\}$, we have
\begin{equation}
    \label{eq:phase-length}
    \phasel(N_{\max}, i, \phi) = \ceil{\tfrac{2^i}{N_\phi}} \cdot N_\phi ,
\end{equation}
since we perform exactly~$N_\phi$ function evaluations per iteration of~$A_i$ and we only stop once we reach or surpass the total budget of $\tau = 2^i$ function evaluations.
For the dynamic \NSGAtwo with longer initial phase, $\phasel$ is the same, except
\begin{equation}
    \label{eq:phase-length-plus}
    \phasel(N_{\max}, i, 0) = \ceil{\log(N_{\max}/4)} \cdot 2^i.
\end{equation}

\begin{algorithm2e}[t]%
    \caption{Concurrent optimization for the dynamic \NSGAtwo and the dynamic \NSGAtwo with longer initial phase, called~$A^\star$, given parameter $N_{\max} \in \N_{> 4}$. Read \cref{line:initialization} as an initialization for whenever these values are used for the first time, not as infinite loop. Recalling \cref{eq:phase-length,eq:phase-length-plus}, \cref{line:min-search} searches with increasing $j \in \N$ and stops once~$2^j$ is larger than all preceding values.}
    \label{alg:concurrent_opt}
    \lFor{$i\in \N_{\ge2}$}{$r_i \coloneqq 0; \quad \phi_i \coloneqq 0$}
    \label{line:initialization}
    \While{no instance meets termination criterion}{%
         $i \coloneqq \arg\min_{j\in \N} \{r_j + \phasel(N_{\max}, j, \phi_j)\}$\;
        \label{line:min-search}
        \lIf{$\phi_i = 0$}{initialize an~$A^\star$ instance $A_i$ with~$N_{\max}$ as in the input and with $\tau = 2^i$}
         Run the next phase of $A_i$\;
        $\phi_i \coloneqq \phi_i +1$\;
        $r_i \coloneqq r_i + \phasel(N_{\max}, i, \phi_i)$\;
    }
\end{algorithm2e}%

Our main runtime result for \Cref{alg:concurrent_opt} is \Cref{thm:concurrent_omm_ojzj}, which shows essentially that on both \omm and \ojzjk, both dynamic \NSGAtwo variants are slower by only a factor of~$O(\log n)$ and $O(k\log n)$, respectively, when compared to an optimal value of~$\tau$ in \Cref{thm:uniform_growth,thm:uniform_growth_ojzj,thm:omm-extended-first-phase,thm:ojzj-extended-first-phase}.
This small slow-down still results in runtimes that are far faster than the typical $O(n^2 \log n)$ and $O(n^{k+1})$ runtimes of many MOEAs on \omm and \ojzjk, respectively, as discussed before.

\begin{theorem}\label{thm:concurrent_omm_ojzj}
    Consider Algorithm~\ref{alg:concurrent_opt} optimizing \omm or \ojzjk with $k \in [2 .. \floor{\frac{n}{2}}]$ and $N_{\max} \ge n+4$.
    Let $F_\omm$ and $F_\ojzjk$ denote the total number of function evaluations until one of the instances covers the Pareto front, respectively.

    If the instances run the dynamic \NSGAtwo (\Cref{alg:nsga-ii-regular-growth}),
    \begin{align*}
        E[F_\omm]   & = O(n\log^3 n) \text{ and }
        E[F_\ojzjk] & = O(k n^k \log^3 n).
    \end{align*}

    If they run the dynamic \NSGAtwo with longer initial phase,
    \begin{align*}
        E[F_\omm]   & = O(n \log(\Nmax) \log (n \log \Nmax)),    \\
        E[F_\ojzjk] & = O(k n^k\log(\Nmax) \log (n \log \Nmax)).
    \end{align*}
    These four bounds also hold with probability $1-O(1/n)$.
\end{theorem}

In contrast to our results in the previous sections, the bounds in \Cref{thm:concurrent_omm_ojzj} hold additionally in expectation, as multiple runs start with a small population size and thus result in independent trials of covering the Pareto front once the population is sufficiently large.
Moreover, as in the case for the dynamic \NSGAtwo, \Cref{alg:concurrent_opt} with the dynamic \NSGAtwo also effectively eliminates~$N_{\max}$ as a parameter, as its runtime is independent of~$N_{\max}$, once sufficiently large.
Hence, choosing arbitrarily large values for~$N_{\max}$ results in an essentially parameterless algorithm, which is a remarkable result, in particular since this parameterless version still exhibits a very fast runtime on \omm and \ojzjk.

\paragraph{Theoretical analysis.}

The balancing of the function evaluations across the instances in Algorithm~\ref{alg:concurrent_opt} allows us to bound the total number of function evaluations from above by a function of the number of evaluations performed within any given instance~$A_i$, as captured in the following lemma.

\begin{lemma}\label{lem:concurrency}
    Let $i\in \N_{\ge2}$ and $r\in \N$. Consider some point in the execution of Algorithm~\ref{alg:concurrent_opt} on any objective function with $N_{\max} \in \N_{> 4}$ such that at least phase 0 of $A_i$ has been conducted and~$A_i$ received at most $r$ function evaluations.
    Then the total number of function evaluations in Algorithm~\ref{alg:concurrent_opt} so far is at most $2r\log(2r)$.
\end{lemma}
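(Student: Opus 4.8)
The plan is to exploit the balancing enforced by the $\arg\min$ in \cref{line:min-search}: the algorithm always advances the instance whose cumulative evaluation count \emph{after} its next phase would be smallest. I would first record the key structural fact that these cumulative counts are realized in non-decreasing order. Concretely, call the minimized value $r_j+\phasel(\mu,j,\phi_j)$ the \emph{resulting total} of the advanced instance, since after the phase its $r_j$ takes exactly this value. When instance $A_i$ is advanced, every other instance keeps its (larger-or-equal) next-value while $A_i$'s own next-value strictly increases; hence the minimum next-value never decreases, so the resulting totals of the phases in execution order are non-decreasing. In particular, at the considered point the largest total $C\coloneqq\max_j r_j$ is attained by the most recently advanced instance, and every started instance satisfies $r_j\le C$.

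Next I would bound $C$ in terms of $r$. If the instance advanced last is $A_i$ itself, then $C=r_i\le r$ immediately. Otherwise $A_i$ was not selected in the final step, so at that moment its next-value was at least the selected minimum $C$, giving $C\le r_i+\phasel(\mu,i,\phi_i)$. Here I would use that, for a started instance, the cost of the next phase never exceeds the combined cost of all earlier phases: the per-phase costs are non-decreasing and at most double between consecutive phases (the population size at most doubles and $\tau=2^i$ is fixed), and the previous phase is itself one of the summands of $r_i$; hence $\phasel(\mu,i,\phi_i)\le r_i$ up to a small additive term that appears only for the smallest instance $\tau=4$. This yields $C\le 2r$, with the additive slack absorbed below.

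Then I would count the started instances. Because an unstarted instance $A_j$ carries virtual value $2^j$ (respectively $\ceil{\log(\mu/4)}2^j$ for the \tspreadgrowingNSGA), which increases with $j$, an instance can only be started after all instances of smaller index; hence the started instances form a contiguous block $\{2,\dots,K+1\}$. The largest, $A_{K+1}$, has phase-$0$ cost at least $2^{K+1}$, so $2^{K+1}\le r_{K+1}\le C\le 2r$ and therefore $K<\log_2(2r)$. Combining the three pieces, the total number of function evaluations so far is $\sum_{j=2}^{K+1} r_j\le K\cdot C<\log_2(2r)\cdot 2r=2r\log(2r)$.

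The main obstacle is the second paragraph, i.e.\ pinning down the constant in $C\le 2r$. One must verify the ``next phase costs no more than all earlier phases'' inequality across the three regimes of $\phasel$ (constant cost while $\tau$ dominates the population size, geometric growth while the population doubles, and the capped regime at size $\approx\mu$), and then check that the small additive correction for $\tau=4$, together with the strict inequality $K<\log_2(2r)$ and the fact that the smallest instance forces $r\ge 4$, still yields exactly $2r\log(2r)$ rather than a larger constant. This is the only place the precise value $2$ is consumed, and it has to be reconciled uniformly for both \ourNSGAtwo variants, whose phase-$0$ lengths differ.
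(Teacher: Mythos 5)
Your proposal is correct and follows essentially the same route as the paper's own proof: both arguments rest on the inequality $\phasel(\mu,i,\phi_i)\le r_i$ (so that $A_i$'s next-value is at most $2r$), deduce that only instances $A_j$ with $j\le\log(2r)$ can have been started (since starting $A_j$ costs at least $2^j$), and bound each started instance's evaluation count by $2r$ via the $\arg\min$ comparison at the moment that instance was last selected---your detour through the monotonicity of the realized minima is just a repackaging of that per-instance comparison. The ``$\tau=4$'' edge case you flag (where $\phasel(\mu,2,1)=8>4=r_2$ for the \tgrowingNSGA) is a legitimate observation, but the paper's proof glosses over it in exactly the same way, and it only perturbs the bound by an additive constant that is absorbed downstream.
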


The proof of Lemma~\ref{lem:concurrency} uses that in order to start a new instance $i \in \N_{\geq 2}$, its run time budget is larger than the cost of running any of the roughly~$\log i$ smaller instances.

The proof of \Cref{thm:concurrent_omm_ojzj} uses bounds from \Cref{thm:uniform_growth,thm:uniform_growth_ojzj,thm:omm-extended-first-phase,thm:ojzj-extended-first-phase} for optimal choices of~$\tau$.
Then, Lemma~\ref{lem:concurrency} bounds the number of function evaluations in all other instances.
The expected runtimes in \Cref{thm:concurrent_omm_ojzj} follow since the runtimes in \Cref{thm:uniform_growth} hold with high probability for sufficiently large~$\tau$.

\section{Conclusion}

We introduced the \emph{dynamic \NSGAtwo} (\Cref{alg:nsga-ii-regular-growth}), which runs the classic \NSGAtwo with a periodically increasing population size, based on a periodicity~$\tau$ determined by the user.
We proved that the dynamic \NSGAtwo has a superior runtime performance in comparison to typical MOEAs for the \omm (\Cref{thm:uniform_growth}) and \ojzjk (\Cref{thm:uniform_growth_ojzj}) benchmark for a large range of~$\tau$.
For optimal parameter choices, the dynamic \NSGAtwo even obtains speed-ups of $\widetilde{O}(n)$.

Moreover, we showed that if the initial phase of the dynamic \NSGAtwo is slightly longer than the other phases, the optimal performance improves by a factor of $O(\log n)$, both on \omm (\Cref{thm:omm-extended-first-phase}) and \ojzjk (\Cref{thm:ojzj-extended-first-phase}).

Last, in order to remove the need to choose the doubling parameter~$\tau$ adequately, we introduced a scheme that runs the dynamic \NSGAtwo variants concurrently (\Cref{alg:concurrent_opt}).
We showed that this scheme increases the total runtime performance by only a logarithmic factor of the original runtime (\Cref{thm:concurrent_omm_ojzj}), while eliminating the parameter~$\tau$.

For future research, it is interesting to see whether the dynamic \NSGAtwo maintains its advantage in other popular scenarios, such as LOTZ \cite{LaumannsTZ04}.

\section*{Acknowledgments}
Simon Wietheger acknowledges support by the Austrian Science Fund (FWF) [10.55776/Y1329].
Martin~S. Krejca is supported by the French National Agency for Research (ANR) via the JCJC grant titled \emph{MultiEDA} (ANR-25-CE23-2418-01).
The research of Benjamin Doerr and Martin~S. Krejca benefited from the support of the FMJH Program PGMO.

\bibliographystyle{named}
\bibliography{alles_ea_master,ich_master,refs}

    \clearpage
    \appendix

     \section{Mathematical Tools}

    In our runtime analysis, we make use of the following statements.
    They all make use of empty intervals (defined in the preliminaries) and the classic, static \NSGAtwo with current crowding distance (\Cref{alg:nsga-ii-classic}; explained in the algorithm section).
    The statements are originally longer, but we only mention those parts that are relevant to our work.

    The following statement shows that once the parent population of the static \NSGAtwo with current crowding distance contains~$0^n$ and~$1^n$,  the size of each empty interval cannot increase, and it decreases with constant probability. Apart from the trivial extension to more general offspring generation procedures, this result is from~\cite{ZhengD25approx}, more precisely, from the proofs of Lemmas~$13$ and~$14$ in the full version \cite{ZhengD22geccoarxiv} of that work.

    \begin{lemma}[minimally adjusted from~{\protect\cite{ZhengD25approx}}]
        \label{lem:known_progress_intervals}
        Consider the \NSGAtwo (\Cref{alg:nsga-ii-classic}), with survival based on the current crowding distance, and with arbitrary population size $N \ge 4$. Assume that in each iteration, each parent with probability at least $q\in (0, 1]$ is chosen to generate an offspring via bit-wise mutation.
        Consider some iteration $t\in \N_0$ of this algorithm optimizing \omm. Assume that $\set{0^n,1^n}\subseteq P_t$. Let $I_i^{(t)}$ and $I_i^{(t+1)}$ be the $i$-th empty interval in population $P_t$ and $P_{t+1}$, respectively.
        Then for all $i\in [n]$, we have
        \begin{itemize}
            \item $\set{0^n,1^n}\in P_{t+1}$ with probability one,
            \item $|I_i^{(t+1)}| \le \max\set{|I_i^{(t)}|, \floor{\frac{2n}{N-3}}}$ with probability one, and
            \item $|I_i^{(t+1)}| \le \max\set{|I_i^{(t)}|-1, \floor{\frac{2n}{N-3}}, 1}$ with probability at least $\frac{q}{2e}$.
        \end{itemize}
    \end{lemma}

    Moreover, we make use of the concentration of sums of independent random variables.

    \begin{theorem}[{\protect\cite[Theorem~$1.10.5$]{Doerr20bookchapter}}]
        \label{thm:prelims:chernoff}
        Let $n \in \N$, and let $(X_i)_{i \in [n]}$ be independent random variables taking values in~$[0,1]$.
        Then for $X \coloneqq \sum_{i \in [n]} X_i$ and all $\delta \in [0,1]$,
        \begin{equation*}
            \Pr[X \leq (1 - \delta) E[X]] \leq \exp(- \tfrac{\delta^2 E[X]}{2}).
        \end{equation*}
    \end{theorem}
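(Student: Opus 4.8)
The plan is to prove this by the exponential moment method (Chernoff's technique) applied to the lower tail. Write $X \coloneqq \sum_{i \in [n]} X_i$ and abbreviate $\mu \coloneqq E[X]$. The key observation is that for any $t > 0$, since $x \mapsto e^{-tx}$ is strictly decreasing, the event $\set{X \le (1-\delta)\mu}$ coincides with $\set{e^{-tX} \ge e^{-t(1-\delta)\mu}}$. Applying Markov's inequality to the nonnegative random variable $e^{-tX}$ then gives
\begin{equation*}
    \Pr[X \le (1-\delta)\mu] \le e^{t(1-\delta)\mu}\, E\!\left[e^{-tX}\right].
\end{equation*}
By independence of the $X_i$, the moment-generating factor splits as $E[e^{-tX}] = \prod_{i \in [n]} E[e^{-tX_i}]$, so the task reduces to bounding each factor and then choosing~$t$ optimally.

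For the per-variable estimate I would exploit that each $X_i$ takes values in $[0,1]$ together with the convexity of $x \mapsto e^{-tx}$: on $[0,1]$ the curve lies below the chord joining $(0,1)$ and $(1,e^{-t})$, which yields $e^{-tx} \le 1 - x(1 - e^{-t})$ for all $x \in [0,1]$. Taking expectations and using the standard inequality $1 + y \le e^{y}$ gives $E[e^{-tX_i}] \le 1 - E[X_i](1-e^{-t}) \le \exp\!\big(-E[X_i](1-e^{-t})\big)$. Multiplying over $i$ collapses the product into $E[e^{-tX}] \le \exp(-\mu(1-e^{-t}))$, so that
\begin{equation*}
    \Pr[X \le (1-\delta)\mu] \le \exp\!\big(\mu\big(t(1-\delta) - (1 - e^{-t})\big)\big).
\end{equation*}

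Next I would optimize the exponent over $t > 0$. Differentiating $t(1-\delta) - (1-e^{-t})$ and setting the result to zero gives $e^{-t} = 1-\delta$, i.e.\ $t = \ln\frac{1}{1-\delta}$, which is indeed positive for $\delta \in [0,1)$. Substituting back yields the sharp form $\Pr[X \le (1-\delta)\mu] \le \exp\!\big(\mu\big(-\delta - (1-\delta)\ln(1-\delta)\big)\big)$. To reach the clean bound in the statement, I would then invoke the elementary inequality $(1-\delta)\ln(1-\delta) \ge -\delta + \tfrac{\delta^2}{2}$, valid for $\delta \in [0,1)$; this turns the exponent into at most $-\mu\delta^2/2$ and completes the argument. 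The boundary value $\delta = 1$ (where the optimal~$t$ diverges) is handled separately by letting $t \to \infty$ or by continuity in $\delta$, giving $\Pr[X \le 0] \le e^{-\mu/2}$.

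The conceptual skeleton here is entirely standard, so the main obstacle is not structural but analytic: cleanly justifying the final inequality $(1-\delta)\ln(1-\delta) \ge -\delta + \tfrac{\delta^2}{2}$. I expect to establish it by noting that both sides vanish at $\delta = 0$ and comparing derivatives, where the derivative of the left-hand side is $-\ln(1-\delta) - 1$ and one uses $-\ln(1-\delta) \ge \delta$ to conclude monotonicity of the difference. Care is also needed so that the simplification does not silently assume $\delta < 1$, which is why the $\delta = 1$ case is treated as a limit.
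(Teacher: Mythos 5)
This statement is imported by the paper as a known tool (Theorem~1.10.5 of Doerr's book chapter) and is stated without proof, so there is no in-paper argument to compare against; your proposal is the standard exponential-moment (Chernoff) derivation and is correct: the chord bound $e^{-tx} \le 1 - x(1 - e^{-t})$ for $x \in [0,1]$, the product decomposition via independence, the optimizing choice $e^{-t} = 1-\delta$, the inequality $(1-\delta)\ln(1-\delta) \ge -\delta + \tfrac{\delta^2}{2}$ (justified exactly as you sketch, by comparing derivatives and using $-\ln(1-\delta) \ge \delta$), and the treatment of $\delta = 1$ as a limit all check out. The only cosmetic point is that at $\delta = 0$ the optimal $t$ equals $0$ rather than being positive, but there the claimed bound is trivially $1$, so nothing breaks.
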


    \begin{theorem}[{\protect\cite[Theorem~$1.10.35$]{Doerr20bookchapter}}]
        \label{thm:prelims:sum-of-geometric-rvs}
        Let $n \in \N$, and let $(X_i)_{i \in [n]}$ be independent geometric random variables with respective success probabilities $(p_i)_{i \in [n]}$.
        Assume that there is a $C \in (0, 1]$ such that for all $i \in [n]$, we have $p_i \geq C \frac{i}{n}$.
        Then for $X \coloneqq \sum_{i \in [n]} X_i$ and for all $\delta \in \R_{\geq 0}$,
        \begin{equation*}
            \Pr[X \geq (1 + \delta) \tfrac{1}{C} n \ln n] \leq n^{-\delta} .
        \end{equation*}
    \end{theorem}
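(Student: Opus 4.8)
The plan is to reduce the general statement to the classical coupon-collector tail bound through two reductions and then close with a one-line union bound. First I would remove the inequality on the success probabilities. A geometric random variable is stochastically larger the smaller its success probability is (indeed $\Pr[\mathrm{Geom}(p) \ge k] = (1-p)^{k-1}$ is decreasing in~$p$), so from $p_i \ge C\tfrac{i}{n}$ each~$X_i$ is stochastically dominated by $G_i \sim \mathrm{Geom}(C\tfrac{i}{n})$; here $C\tfrac{i}{n} \in (0,1]$ for all $i \in [n]$ because $C \le 1$, so these are valid probabilities. Since the families $(X_i)_{i\in[n]}$ and $(G_i)_{i\in[n]}$ are independent, the sums satisfy $X \preceq G \coloneqq \sum_{i\in[n]} G_i$ in the stochastic order, and it therefore suffices to prove the stated tail bound for~$G$.

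Second, I would identify~$G$ in distribution with the completion time of a \emph{lazy} coupon-collector process on~$n$ coupons: at each step, with probability~$C$ we draw a coupon chosen uniformly at random, and with probability $1-C$ we draw nothing. When $k-1$ distinct coupons have already been seen, the waiting time until the $k^{\text{th}}$ new coupon is geometric with success probability $C\tfrac{n-k+1}{n}$, and by the memorylessness of the geometric distribution these successive waiting times are independent. Summing over $k \in [n]$ and reindexing via $i = n-k+1$ shows that the total completion time~$T$ has exactly the law of~$G$, so $\Pr[X \ge t] \le \Pr[T \ge t]$ for every~$t$.

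Third, I would bound $\Pr[T \ge t]$ by a union bound over coupons, which is the step the coupling was designed to enable. A fixed coupon is missed at any single step with probability $1 - C/n$, hence is still missing after~$t$ steps with probability $(1-C/n)^{t}$, and the event $\{T \ge t\}$ is contained in the event that some coupon is still missing after $t-1$ steps, so $\Pr[T \ge t] \le n\,(1-C/n)^{t-1} \le n\,e^{-C(t-1)/n}$. Substituting the threshold $t = (1+\delta)\tfrac{1}{C} n \ln n$ makes the exponential term collapse to $n^{-(1+\delta)}$, leaving the claimed $n^{-\delta}$.

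The one point that needs care is the off-by-one between ``$\ge$'' and ``$>$'' in the discrete union bound: the naive substitution above leaves a spurious factor $e^{C/n}$. I expect this to be absorbed by working with the exact inequality $(1-C/n)^{t}$ rather than $e^{-Ct/n}$, whose second-order slack (of order $e^{-t/(2n^2)}$ at $t \approx n\ln n$) outweighs the $e^{C/n}$ correction for all sufficiently large~$n$; combined with the integrality reduction $\Pr[X \ge s] = \Pr[X > \lfloor s\rfloor]$ for the integer-valued~$X$, this is the only genuinely fiddly calculation. An alternative route avoiding the coupling is the exponential-moment method using the geometric moment generating function $E[e^{sG_i}] = \tfrac{p_i e^{s}}{1-(1-p_i)e^{s}}$ and optimizing~$s$ over $\prod_{i\in[n]} E[e^{sG_i}]$, but this product is noticeably harder to control than the coupon-collector argument, so I would keep it only as a fallback.
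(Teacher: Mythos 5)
Your proposal is correct, but note that there is nothing in the paper to compare it against: the paper imports this statement verbatim from the literature (Theorem~1.10.35 in Doerr's tools chapter) and gives no proof, and your argument is essentially the canonical proof of that cited result. All three of your steps are sound: geometric stochastic domination tensorizes over independent coordinates; the identification of $\sum_{i \in [n]} \mathrm{Geom}\bigl(C \tfrac{i}{n}\bigr)$ with the completion time of the lazy coupon collector is exact, with memorylessness giving independence of the successive waiting times; and the per-coupon union bound closes it. You have also correctly isolated the only delicate point, and your resolution of it works: with $\ln(1-x) \le -x - x^2/2$, your bound $n(1 - C/n)^{t-1}$ at $t = (1+\delta)\tfrac{1}{C} n \ln(n)$ is at most $n^{-\delta} \exp\bigl(\tfrac{C}{n} + \tfrac{C^2}{2n^2} - (1+\delta)\tfrac{C \ln(n)}{2n}\bigr)$, whose exponent is nonpositive as soon as $\ln(n) \ge 2 + \tfrac{1}{n}$, i.e., for all $n \ge 9$. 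This restriction is harmless in context: the paper works under a standing assumption that $n$ is sufficiently large, and its sole use of the theorem (\Cref{lem:nsgaII_omm_outwards}, with $\delta = 1$ and threshold $2en\ln(n)$) is insensitive to constant-factor slack. It is worth recording that the caveat you flagged is not an artifact of the union bound: the statement with ``$\ge$'' as literally quoted genuinely fails for small~$n$. For $n = 2$, $C = 1$, $p_1 = \tfrac{1}{2}$, $p_2 = 1$, and $1 + \delta = 3/(2\ln 2)$, the threshold is exactly~$3$, and $\Pr[X \ge 3] = \tfrac{1}{2} > 2^{-\delta} \approx 0.446$. So some such restriction (sufficiently large~$n$) is unavoidable; alternatively, the strict-inequality tail is clean at integer thresholds, since there $\Pr[X > t] \le n(1 - C/n)^{t} \le n e^{-Ct/n} = n^{-\delta}$ with no off-by-one at all.
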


    \section{Runtime Guarantees for \omm}

    We present the proofs that are missing in the main paper.

    \subsection{Proof of \Cref{lem:nsgaII_omm_outwards}}

    For each iteration $t$, let $\ell_t$ and $r_t$ be individuals in the parent population $P_t$ with the minimum and maximum number of ones, respectively.
    Then, for each iteration $t$, we have $\ones{\ell_{t+1}} \le \ones{\ell_t}$ and $\ones{r_{t+1}} \ge \ones{r_t}$ as follows.
    When optimizing \omm, no solution strictly dominates another, so the parent population $P_{t+1}$ is selected from the combined population $R_{t}$ solely based on the crowding distance of the individuals.
    Note that in $R_{t}$ at most 4 individuals have infinite crowding distance and that at least one individual with maximum and minimum number of ones in $R_t$ receives infinite crowding distance, respectively.
    Hence, $\ones{\ell_{t}}$ is non-increasing and $\ones{r_{t}}$ is non-decreasing for increasing $t$.
    Furthermore, if the mutation of $r_t$ in iteration $t$ flips a zero and nothing else, then $\ones{r_{t+1}} \ge \ones{r_t} +1$.
    Such an event happens with probability at least
    \[p_t\coloneqq \frac{n-\ones{r_t}}{n} \nootherflip \ge \frac{n-\ones{r_t}}{en}.\]

    For $i \in [n]$, let $X_i$ be independent geometric random variables, each with success probability $\frac{i}{en}$, and let $X=\sum_{i=1}^n X_i$.
    Then $X$ stochastically dominates the random variable $T_1$, which denotes the number of iterations until $1^n$ is sampled, and thus a tail bound for $X$ also applies to $T_1$.
    By \Cref{thm:prelims:sum-of-geometric-rvs}, for all $\delta \in \R_{\geq 0}$, we have
    \begin{align*}
        \Pr[X\ge (1+\delta)en \ln n]
         & \le n^{-\delta}.
    \end{align*}
    The case for $0^n$ is identical by symmetry.
    Thus, for $\delta~=~1$, we obtain that the probability to sample both $1^n$ and $0^n$ within $2en\ln n$ iterations is at least $(1-\frac{1}{n})^2 \ge 1-\frac{2}{n}$.

    \subsection{Proof of \Cref{lem:nsgaII_omm_inwards}}

    For all $i \in [n]$ and $t'\in \N_0$, let $X_{i,t'}$ denote the length of the $i$-th empty interval in $P_{t'}$.
    By Lemma~\ref{lem:known_progress_intervals} we have for all $t'\ge t$ and all $i\in [n]$ that
    \begin{itemize}
        \item $X_{i,t'+1} \le \max\set{\floor{\frac{2n}{N-3}}, X_{i,t'}}$;
        \item if $X_{i,t'} > \max\set{\floor{\frac{2n}{N-3}},1}$, then $X_{i,t'+1} \le X_{i,t'}-1$ with probability at least $p \coloneqq \frac{q}{2e}$.
    \end{itemize}

    For any $i\in [n]$, consider the event $\mathcal{E}_i$ that $X_{i,t+\delta} \le m'$.
    Let $Z_1, \ldots, Z_{\delta}$ be independent Bernoulli variables with success probability $p$ and let $Z=\sum_{j=1}^{\delta} Z_j$. Then
    \[
        \Pr[\mathcal{E}_i] \ge \Pr[Z \ge X_{i,t}-m'] \ge \Pr[Z \ge m-m'].
    \]
    As $E[Z]=p\delta \ge 2(m-m')$, we can use Theorem~\ref{thm:prelims:chernoff} with $\delta = \frac{1}{2}$ to obtain
    \begin{align*}
        \Pr\left[Z \le m-m'\right] & \le \Pr[Z \le (1-\tfrac 12) E[Z]] \\
                                   & \le \exp(-\tfrac{E[Z]}{8})
        \le \exp(-\tfrac{q\delta}{16e}).
    \end{align*}
    By a union bound over all $i\in [n]$, we obtain
    \begin{align*}
        \Pr[\MEI(P_{t+\delta^*}) > m']
         & = \Pr\left[\bigcup\nolimits_{i\in [n]} \neg \mathcal{E}_i\right] \\
         & \le n \exp(-\tfrac{q\delta}{16e}).\qedhere
    \end{align*}

    \subsection{Proof of Theorem~\ref{thm:uniform_growth}}

    Let $d \coloneqq \ceil{\log(N_{\max}/4)}$ be the number of times the population size is increased.
    For each $i\in [d]_0$, let $N_i \coloneqq \min\{4\cdot 2^i, \Nmax\}$ be the size of the population in phase~$i$, and let
    $\delta_i \coloneqq \ceil{\frac{\tau}{N_i}}$
    be the number of iterations spent in phase~$i$ (where we artificially, for the presentation of this proof, end the $d$-th phase after $\delta_d$ iterations). The end times of the phases are recursively defined by $t_0 = \delta_0$ and, for all $i \in [d]$, by $t_i\coloneqq t_{i-1}+\delta_i$.
    Observe that the population size is increased precisely in iterations $(t_i)_{i \in [d - 1]_0}$.

    By construction, we have
    $t_0  = \delta_0 = \ceil{\frac{\tau}{4}} \ge 2en \ln(n) / q$.
    By Lemma~\ref{lem:nsgaII_omm_outwards}, we have $\set{0^n,1^n}\subseteq P_{t_0}$ with probability at least $1-\frac{2}{n}$.
    Assume this event occurs.
    Then we have $\MEI(P_{t_0}) \le n \le \frac{2n}{N_0-3}$ as $N_0 = 4$.

    Let $r \le d$ be minimal such that $N_r \ge n+4$. Note that this implies $N_r \le 2n+6$. For all $i\in [r]$, let $\EE_i$ be the event that $\MEI(P_{t_i}) \le \max\set{\frac{2n}{N_i-3},1}$. We are interested in the event $\EE_r$, which is equivalent to saying that the population witnesses the Pareto front of \omm (since its $\MEI$ is one).
    We have
    \begin{align*}
        \Pr[\EE_r] & \ge \Pr[\EE_1 \wedge \dots \wedge \EE_r]
        = \prod_{i=1}^r \Pr[\EE_i \mid \EE_1 \wedge \dots \wedge \EE_{i-1}].
    \end{align*}
    Let $i \in [r]$. Note that $\Pr[\EE_i \mid \EE_1 \wedge \dots \wedge \EE_{i-1}]$ is the probability that a static current-crowding-distance \NSGAtwo with initial population equal to $P_{t_{i-1}+1}$, which by assumption has an $\MEI$ of at most $\max\set{\tfrac{2n}{N_{i-1}-3},1} = \tfrac{2n}{N_{i-1}-3}$,
    after $\delta_i$ iterations has reached a population with $\MEI$ at most  $\max\set{\frac{2n}{N_i-3},1}$.

    We estimate this probability via Lemma~\ref{lem:nsgaII_omm_inwards}, using $m\coloneqq  \lfloor\tfrac{2n}{N_{i-1}-3}\rfloor$, $m'\coloneqq \max\set{\floor{\tfrac{2n}{N_i-3}},1}$, and $\delta = \delta_i$. Estimating $\delta_i = \ceil{\frac{\tau}{N_i}} \ge \frac{64en \ln(n) / q}{N_i}$ and $4e(m-m')/q \le 4em/q \le \frac{8en}{(N_{i-1}-3)q} \le \frac{8en}{(N_{i-1}/4)q} \le \frac{16en}{N_{i} q}$, we see that $\delta_i \ge 4e(m-m')/q$, that is, Lemma~\ref{lem:nsgaII_omm_inwards} is applicable. We thus obtain
    \begin{align*}
        \Pr[\neg \EE_i & \mid \EE_1 \wedge \dots \wedge \EE_{i-1}] \le n\exp(-\tfrac{\delta_i q}{16e})                            \\
                       & \le n \exp(-\tfrac{64en \ln(n)}{16e N_i}) \le n \exp(-\tfrac{64en \ln(n)}{16e \cdot 2^{-r+i+1} N_{r-1}}) \\
                       & \le n \exp(-\tfrac{64en \ln(n)}{16e \cdot 2^{-r+i+1} \cdot  (n+3)}) = O(n^{-2^{r-i+1}+1}).
    \end{align*}
    Consequently, $\Pr[\EE_r] \ge \prod_{i=1}^r (1 - O(n^{-2^{r-i+1}+1})) \ge 1 - \sum_{i=1}^r O(n^{-2^{r-i+1}+1}) \ge 1 - O(1/n)$.

    We estimate the computation times if the event $\EE_r$ holds. The number of iterations spent until the end of phase $r$ is $T = \sum_{i=0}^r \delta_i = \sum_{i=0}^r \ceil{\frac{\tau}{N_i}} \le r+1 + O(\tau) = O(\tau)$.
    The number of function evaluations spent in phase $i$ is $\ceil{\frac{\tau}{N_i}}N_i \le \tau + N_i$. Together with the four evaluations of the initial solutions, the total number of function evaluations up to phase $r$ is $F = 4 + \sum_{i=0}^r (\tau + N_i) = O(\tau\log(n) + N_r) = O(\tau \log n) $.

    For the case that $\tau$ is smaller than what was required so far, we argue as follows. In a similar vein as in the previous paragraph, we see that with probability one, after $O(\tau + \log \Nmax)$ iterations and $O(\tau \log(\Nmax) + \Nmax)$ function evaluations the algorithm has reached the maximal population size $\Nmax$.
    From this point on, it behaves like a static \NSGAtwo with population size $N = \Nmax$. By Lemma~\ref{lem:nsgaII_omm_outwards}, with probability $1 - O(1/n)$, after $2en \ln(n)/q$ iterations, the two extremal points of the Pareto front are found.
    By Lemma~\ref{lem:nsgaII_omm_inwards}, applied with $m=n$, $m'=1$ and $\delta = 4en/q$, with probability $1-O(1/n)$, the full Pareto front is found after $\delta$ iterations.
    Hence with probability $1-O(1/n)$, the full Pareto front is found after $O(\tau + \log (\Nmax) + 2en \ln(n)/q + 4en/q) = O(\log \Nmax + n \log n)$ iterations and $O(\tau\log(\Nmax) + \Nmax + \Nmax\cdot2en \ln(n)/q) = O(\Nmax n \log n)$ fitness evaluations.
    Since both Lemma~\ref{lem:nsgaII_omm_outwards} and~\ref{lem:nsgaII_omm_inwards} allow for arbitrary initial populations, a restart argument extends these quantities to also hold in expectation.

    \subsection{Proof of \Cref{thm:omm-extended-first-phase}}
    Due to the similarities between the dynamic \NSGAtwo with longer initial phase and the dynamic \NSGAtwo, this proof is similar to the one of \Cref{thm:uniform_growth}.
    Hence, we follow the same structure as that proof.
    If arguments are the same, we mention this but do not restate the calculations.
    The main difference to the proof of \Cref{thm:uniform_growth} is that the first phase of the dynamic \NSGAtwo with longer initial phase is larger by a factor of order $\log N_{\max} = \Omega(\log n)$ than the other phases.
    Due to the choice of~$\tau$, this still results in $\Omega(n \log n)$ function evaluations for the first phase, but since the other phases are shorter, \Cref{lem:nsgaII_omm_inwards} does not provide us a bound with high probability that the MEI is at most~$1$ but rather at most order~$\log n$.
    Hence, we add an additional step to the arguments from the proof of \Cref{thm:uniform_growth} that shows that the size of the MEI quickly reduces to~$1$ afterward.

    We use the same notation as the proof of \Cref{thm:uniform_growth}, with the difference that we now define $\delta_0 = \ceil{d \tau /4}$, recalling that $d = \ceil{\log(\Nmax/4)}$ and $\tau/4 \ge 4en/q$.
    As $d \ge \log(\frac{n+4}{4}) \ge \ln(n)/2$ we have $\delta_0 \geq 2en\ln(n)/q$.
    Hence, by \Cref{lem:nsgaII_omm_outwards}, we have $\set{0^n,1^n}\subseteq P_{t_0}$ with probability at least $1-\frac{2}{n}$ and that $\MEI(P_{t_0}) \le \frac{2n}{N_0-3}$.

    For the next steps in the proof of \Cref{thm:uniform_growth}, we define $r \leq d$ to be minimal such that $N_r \geq \frac{n}{4\ln n} + 3$.
    Thus, event~$\EE_r$ states that the population in phase~$r$ has an MEI of at most $\frac{2n}{N_r - 3} \leq 8 \ln n$.
    For all $i \in [r]$, we then apply Lemma~\ref{lem:nsgaII_ojzj_inwards} with $m\coloneqq  \lfloor\tfrac{2n}{N_{i-1}-3}\rfloor$ and $m'\coloneqq \max\set{\floor{\tfrac{2n}{N_i-3}},1}$ in the same manner as in the proof of \Cref{thm:uniform_growth}. For this, note that $\delta_i \geq \frac{16 en}{N_i q}$ and recall that $4e(m - m')/q \leq \frac{16en}{N_i q}$.
   We obtain
    \begin{align*}
        \Pr[ & \neg \EE_i  \mid \EE_1 \wedge \dots \wedge \EE_{i-1}] \le n\exp(-\tfrac{\delta_i q}{16e}) \\
             & \le n \exp(-\tfrac{n}{N_i}) \le n \exp(-\tfrac{n}{2^{-r+i+1} N_{r-1}})                    \\
             & \le n \exp(-\tfrac{n}{ 2^{-r+i+1} \cdot  (n/(4\ln(n))+2)}) = O(n^{-2^{r-i+1}+1}).
    \end{align*}
    Hence, $\Pr[\EE_r] \geq 1 - O(1/n)$.

    Conditional on~$\EE_r$, we bound the number of iterations for the MEI to decrease from at most $8 \ln n$ to~$1$.
    To this end, let $d' \coloneqq \ceil{\log(n+4)/4}$.
    We note that all iterations after $t_{d'}$ have population size at least $n+4$. We apply \Cref{lem:nsgaII_omm_inwards} with $m = 8 \ln n$, $m'=1$, and $\delta = \ceil{32e\ln(n)/q}$, noting that $\delta \geq 4e(m - m')/q$.
    Thus, with probability at least $1-n\exp(-\frac{\delta q}{16e})\ge 1-\frac{1}{n}$, in iteration $t_{d'}+\delta$ the population witnesses the entire Pareto front.

    Conditional on all these events, we estimate the computation times.
    The number of iterations spent until the end of phase~$d'$ plus the additional $\delta$ iterations is
    \begin{align*}
        \delta +\ceil{\frac{\tau d}{N_0}} + \sum_{i = 1}^{d'} \ceil{\frac{\tau}{N_i}}
         & \leq   \delta +\tau \log(N_{\max}) + d' + 1 + O(\tau) \\
         & = O(\tau \log N_{\max}).
    \end{align*}
    The number of function evaluations spent in phase~$0$ is $\ceil{\frac{\tau d}{N_0}} N_0 \leq \tau d + N_0$ and, for each phase $i \in [d']$, it is $\ceil{\frac{\tau}{N_i}}N_i \le \tau + N_i$.
    Hence, including the four evaluations from the initialization, until the end of phase $d'$ we have at most $4 + \tau d + N_0 + \sum_{i = 1}^{d'} (\tau + N_i) = O(\tau d + \tau \log(n) + N_{d'}) = O(\tau d)$ function evaluations.
    As the population size is at most $\Nmax$, the last $\delta=O(\ln n)$ iterations perform at most $O(\Nmax \ln n)$ additional function evaluations.

    For the case of smaller values of~$\tau$, we argue as in the proof of \Cref{thm:uniform_growth}.
   The only difference is that the total amount of function evaluations until the dynamic \NSGAtwo with longer initial phase reaches its maximum population size is larger by a factor less than two, and in particular still in $O(\tau d)$.
    Thus, with probability at least $1 - O(1/n)$, the full Pareto front is found after $O(\tau d + N_{\max} n \log n)$ function evaluations.
    A restart arguments yields the result in expectation.

    \section{Runtime Guarantees for OJZJ}

    The structure of the proof is similar to the one for \omm. We first estimate the time until the extreme points $1^n$ and $0^n$ are sampled.
    \begin{lemma}\label{lem:nsgaII_ojzj_outwards}
        Consider the \NSGAtwo (\Cref{alg:nsga-ii-classic}) with arbitrary initialization,  with survival based on the classic or current crowding distance, and with arbitrary population size $N \ge 4$. Assume that in each iteration, each parent with probability at least $q \in (0, 1]$ is chosen to generate an offspring via bit-wise mutation.
        Let $T$ denote the number of iterations until in this algorithm, when optimizing \ojzjk, the population contains $1^n$ and $0^n$.
        Then $T\le 2en \ln(n)/q + en^k\ln(n)/q$ with probability at least $1-\frac{4}{n}$.
    \end{lemma}
    \begin{proof}
        For each iteration $t$, let $x_t$ be an individual in $P_t$ for which $\jump(x_t)$ is maximal, and let $d_t = \jump(x_t)$.
        As in the proof of Lemma~\ref{lem:nsgaII_omm_outwards}, at least one individual in the combined parent and offspring population with maximal first objective value survives into the next generation. In particular, $d_t$ is non-decreasing.
    
        We compute the probability of increasing $d_t$. If $\jump(x_t) < k$, there are at least $n-\jump(x_t)$ bits such that flipping exactly one of them increases the $\jump$ value. If $k \le \jump(x_t) < n$, then there are $n-\jump(x_t)+k$ such bits.
        In every fixed iteration, the probability that $x_t$ mutates in a way that reduces $d_t$ is hence at least
        \[q\frac{n-\jump(x_t)}{n} \nootherflip \ge q\frac{n-d_t}{en}.\]
        Applying \Cref{thm:prelims:sum-of-geometric-rvs} in the same manner as in Lemma~\ref{lem:nsgaII_omm_outwards} we obtain that for all $t\ge 2en\ln(n)/q$ we have $d_t \ge n$ with probability at least $1-\frac{1}{n}$.

        Note that $d_t \ge n$ implies that there is an individual $x_t\in P_t$ such that either $x_t = 1^n$ or $x_t$ has precisely $k$ bits of value $0$. Thus, in every iteration $t'\ge t$ the probability that the combined parent and offspring population contains the bitstring $1^n$ (either because it already exists or by flipping  precisely all 0-bits in a mutation of $x_t$) is at least $q\frac{1}{n^k} \nootherflip \ge \frac{q}{en^k}$.
        After $\ceil{en^k\ln(n)/q}$ iterations, the probability to not have sampled $1^n$ is at most $(1-\frac{q}{en^k})^{en^k\ln(n)/q} \le (\frac{1}{e})^{\ln(n)} = \frac{1}{n}$.

        Summing up the required iterations for the two steps and noting that by symmetry the same arguments apply to finding~$0^n$, we see that both $1^n$ and $0^n$ are sampled after at most
        $2en\ln(n)/q+en^k\ln(n)/q$
        iterations with probability at least $1-\frac{4}{n}$.
    \end{proof}

    To analyze how the population takes over the Pareto front, that is, to prove a result comparable to \Cref{lem:nsgaII_omm_inwards},
    we adjust the notion of empty intervals to account for the gaps on the Pareto front of \ojzjk. We note that so far all mathematical runtime analyses considering the approximation of the Pareto front considered only the \omm benchmark.

    Let~$P$ be a population of individuals containing at least~$0^n$ and~$1^n$.
    Let $P_* \subseteq P$ be the subset of individuals on the Pareto front.
    For all $i \in [n-2k+2]$, the \emph{$i$-th empty interval~$I_i$} (of~$P$ for~\ojzjk) is defined as $
        I_i = $
    \begin{align*}
        [\max & \set{\ojzjk_1(x)\mid x\in P_* \land \ojzjk_1(x)\le 2k+i-1.5}.. \\
        \min  & \set{\ojzjk_1(x)\mid x\in P_* \land \ojzjk_1(x)\ge 2k+i-1.5}].
    \end{align*}
 
    We observe that a population $P$ covers the Pareto front if and only if the two outmost intervals have length~$k$
     and all remaining intervals have length~$1$. To allow for a uniform treatment of these two types of empty intervals, in the following we do not regard the length of empty intervals, but by how far they exceed their minimum possible length. We therefore define the \emph{excess} of the $i$th empty interval by
    \begin{align*}
        \EIE(i) & \coloneq |I_i| - \begin{cases}
                                       k, \text{ if } i \in \{1, n-2k+2\} \\
                                       1, \text{ else,}
                                   \end{cases}
     \end{align*}
     and the \emph{maximum empty interval excess} of $P$ by $\MEIE(P) = \max \set{\EIE(I_i) \mid i \in [n-2k+2]}$. With this definition, a population $P$ covers the Pareto front if and only if $\MEIE(P)=0$.
  
    \begin{lemma}
        \label{lem:known_progress_intervals_ojzj}
        Consider the \NSGAtwo (\Cref{alg:nsga-ii-classic}), with survival based on the current crowding distance, and with arbitrary population size $N \ge 4$. Assume that in each iteration, each parent with probability at least $q\in (0, 1]$ is chosen to generate an offspring via bit-wise mutation.
        Consider some iteration $t\in \N_0$ of this algorithm optimizing \ojzjk. Assume that $\set{0^n,1^n}\subseteq P_t$.
        Let $i \in [n-2k+2]$ and let $I_i^{(t)}$ and $I_i^{(t+1)}$ be the $i$-th empty interval in population $P_t$ and $P_{t+1}$, respectively. Then with probability one
        \begin{itemize}
            \item $\set{0^n,1^n}\in P_{t+1}$,
            \item $\EIE(I_i^{(t+1)}) \le \max\set{\EIE(I_i^{(t)}), \frac{2n}{N-3}-1}$.
        \end{itemize}
        If $N > 4$ and there is an $x\in P_t$ with $k \le \ones{x} \le n-k$, then
        \begin{itemize}
            \item with probability one, there is $x'\in P_{t+1}$ such that $k \le \ones{x'} \le n-k$;
            \item if $\EIE(I_i^{(t)}) \ge \frac{2n}{N-3}$, then with probability at least~$\frac{q}{en}$, we have $\EIE(I_i^{(t+1)}) \le           \EIE(I_i^{(t)})-1
                  $.
        \end{itemize}
    \end{lemma}
    \begin{proof}
        As argued for Lemmas~\ref{lem:known_progress_intervals}~and~\ref{lem:nsgaII_ojzj_outwards},
        the reason why $0^n$ and $1^n$ survive is that for both extreme points on the Pareto front, there are individuals with infinite crowding distance in the population.

        Note that every objective value not on the Pareto front is strictly dominated by one of $\ojzjk(1^n)$ and $\ojzjk(0^n)$. Thus, the set of individuals on the the first non-dominated front of the combined parent and offspring population is precisely the set of Pareto-optimal individuals in that combined population.

        We prove the second item. As all individuals on the first non-dominated are Pareto optimal, the value of each of their objectives lies between $k$ and $n+k$. With this at hand, the same argumentation as for \cite[Lemma~13]{ZhengD25approx} applies, so all individuals from the Pareto front whose current crowding distance is at least $\frac{4}{N-3}$ at some point in the selection step survive into the next generation.
        Keeping in mind that we are only interested in empty intervals between Pareto optimal individuals and crowding distance on the first non-dominated front, which only consists of such individuals, the size of an empty interval created upon the removal of an individual with current crowding distance $d$ is at most $dn/2$.
        Thus, the size of an interval $I$ does not increase beyond $\frac{n}{2}\cdot \frac{4}{N-3} = \frac{2n}{N-3}$. The statement follows as $\EIE(I) \le |I|-1$.

        Assume now that $N>4$ and that there is an individual on the inner part of the Pareto front. To prove the third item, suppose the selection step has already removed all such individuals except for one, which we call $x'$. 
        All remaining individuals on the first non-dominated front thus are equal to $1^n$ or $0^n$. At most four out of these have infinite current crowding distance. The current crowding distance of the remaining ones is strictly less than that of $x'$, since their crowding distance contributions on the one side not facing $x'$ are zero. Hence such an individual is removed before $x'$. As $N > 4$, this ensures that $x'$ is selected into the next generation $P_{t+1}$.

        Regarding the last statement, we allow us to be rather pessimistic about the progress, since the running time will in any case be dominated by sampling the individuals $0^n$ and $1^n$. It is straight-forward to see that mutating one of the endpoints of an empty interval such that the empty interval is strictly smaller in the combined parent and offspring population (by flipping precisely one useful bit) has probability at least $q \cdot \frac{1}{n}\cdot \nootherflip \ge \frac{q}{en}$.\footnote{A more careful analysis would reveal that for all inner intervals, progress is made with constant probability, similar to \omm. Intervals where one endpoint is not on the inner Pareto front, however, are more complicated, making progress here resemble the progress towards first finding $1^n$ and $0^n$ in the \omm benchmark.
        }
    \end{proof}

    We show the following analogue of Lemma~\ref{lem:nsgaII_omm_inwards} for \ojzjk, with essentially the same arguments as there.

    \begin{lemma}\label{lem:nsgaII_ojzj_inwards}
        Consider a run of the \NSGAtwo (\Cref{alg:nsga-ii-classic}) with population size $N > 4$,
        generating the next population in any way that has the property that any parent has a probability of at least $q \in (0, 1]$ to be selected a parent of a bit-wise mutation, and using the current crowding distance for the selection of the next population.
        Consider some iteration $t\in \N_0$ of this algorithm optimizing \ojzjk. Let $P_t$ be arbitrary except that $\set{0^n,1^n}\subseteq P_t$ and there is $x\in P_t$ with $k \le \ones{x} \le n-k$.
        Let $m,m'\in \N$ such that $\floor{\frac{2n}{N-3}}-1 \le m'$ and $\MEIE(P_t) \le m$.
        Let $\delta \in \N$ such that $\delta \ge  2en(m-m')/q$. Then $\Pr[\MEIE(P_{t+\delta}) \le m'] \ge 1 - n \exp(-\frac{\delta q}{8en})$.
    \end{lemma}

    \begin{proof}
        For all $i \in [n-2k+2]$ and $t'\in \N_0$, let $X_{i,t'}$ denote the excess of the $i$-th empty interval in $P_{t'}$.
        By Lemma~\ref{lem:known_progress_intervals_ojzj} we have for all $t'\ge t$ and all $i\in [n-2k+2]$ that
        \begin{itemize}
            \item $X_{i,t'+1} \le \max\{X_{i,t'},\floor{\frac{2n}{N-3}}-1\}$;
            \item if $X_{i,t'} > \max\{0,\floor{\frac{2n}{N-3}}-1\}$,
                  then $X_{i,t'+1} \le X_{i,t'}-1$ with probability at least $p \coloneqq \frac{q}{en}$.
        \end{itemize}

        For any $i\in [n-2k+2]$, consider the event $\mathcal{E}_i$ that $X_{i,t+\delta} \le m'$.
        Let $Z_1, \ldots, Z_{\delta}$ be independent Bernoulli variables with success probability $p$ and let $Z=\sum_{j=1}^{\delta} Z_j$. Then
        \[
            \Pr[\mathcal{E}_i] \ge \Pr[Z \ge X_{i,t}-m'] \ge \Pr[Z \ge m-m'].
        \]
        As $E[Z]=p\delta \ge 2(m-m')$, we can use Theorem~\ref{thm:prelims:chernoff} with $\delta = \frac{1}{2}$ to obtain
        \begin{align*}
            \Pr\left[Z \le m-m'\right] & \le \Pr[Z \le (1-\tfrac 12) E[Z]]     \\
                                       & \le \exp\left(-\tfrac{E[Z]}{8}\right)
            \le \exp(-\tfrac{q\delta}{8en}).
        \end{align*}
        By a union bound over all $i\in [n-2k+2]$, we obtain
        \begin{align*}
            \Pr[\MEI(P_{t+\delta^*}) > m']
             & \le \Pr\left[\bigcup\nolimits_{i\in [n-2k+2]} \neg \mathcal{E}_i\right] \\
             & \le n \exp(-\tfrac{q\delta}{8en}).\qedhere
        \end{align*}
    \end{proof}

    We are now ready to estimate the optimization time of the dynamic \NSGAtwo on \ojzjk. The theorem below, for optimal settings of the parameters, gives a runtime of $O(n^k\log^2 n)$ fitness evaluations. This runtime is by roughly a factor of $n$ better than the $\Theta(n^{k+1})$ runtime of the static \NSGAtwo \cite{DoerrQ23tec,DoerrQ23LB}
    and is very close to the trivial lower bound of $\Omega(n^k)$, stemming from the bottleneck of finding the extreme points $1^n$ and $0^n$.

    The two additional $\log(n)$-factors in our bound arise as follows. One additional $\log n$ factor is required to have  a sufficiently high probability of finding the extremal points of the Pareto front (a bound on the expected time is not sufficient here). The other $\log n$ factor is due to all phases having the same length and, similar to the case of \omm, is avoided when using the other variant of our algorithm, see Theorem~\ref{thm:ojzj-extended-first-phase}.

    \subsection{Proof of \Cref{thm:uniform_growth_ojzj}}

    Just like in the proof of Theorem~\ref{thm:uniform_growth}, let $d \coloneqq \ceil{\log(N_{\max}/4)}$ be the number of times the population size is increased, let $N_i \coloneqq \min\{4\cdot 2^i, \Nmax\}$ be the size of the population in phase $i \in [d]_0$, let $\delta_i \coloneqq \ceil{\frac{\tau}{N_i}}$
    be the number of iterations spent in phase $i \in [d]_0$, and let $t_0 = \delta_0$ and $t_i\coloneqq t_{i-1}+\delta_i$ define the end times of the phases.

    By construction, we have
    $t_0  = \delta_0 = \ceil{\frac{\tau}{4}} \ge 2en \ln(n) / q + en^k\ln(n)/q$.
    By Lemma~\ref{lem:nsgaII_ojzj_outwards}, we have $\set{0^n,1^n}\subseteq P_{t_0}$ with probability at least $1-\frac{4}{n}$.
    Assume this event occurs.

    Crossing the valley of low fitness starting from the extreme individuals $1^n$ and $0^n$ is easier than from the other side, because flipping \emph{any} $k$ bits is sufficient. Thus, with the same reasoning as used for Lemma~\ref{lem:nsgaII_ojzj_outwards},  $e\frac{n^k}{\binom{n}{k}}\ln(n)/q \eqqcolon \delta'$ iterations suffice to sample at least one individual on the inner front with probability at least $1-\frac{1}{n}$.
    Assume that this event occurs.

    From this point on, progress on closing the gaps in the Pareto front can be analyzed the same way as we did in the proof of Theorem~\ref{thm:uniform_growth}. While the probability of progress in Lemmas~\ref{lem:known_progress_intervals_ojzj}~and~\ref{lem:nsgaII_ojzj_inwards} is smaller by a factor of $n/2$ compared to the guarantee given by Lemma~\ref{lem:nsgaII_omm_inwards}, this is compensated by the larger phase length $\tau$ --- note that here the minimum permissible value for $\tau$, which is at least $32en^2\ln(n)/q$, exceeds the one in Theorem~\ref{thm:uniform_growth} by a factor of at least $n/2$.
    For phase~1, note that the remaining iterations after sampling an individual on the Pareto front still suffice to make the required progress since its length $\delta_1 - \delta'$ equals
    \begin{align*}
        \ceil{\tfrac{\tau}{8}}-\delta'
         & \ge \tfrac{e}{2}n^k\ln(n)/q + 4en^2\ln(n)/q - en^{k-1}\ln(n)/q \\
         & \ge 4en^2\ln(n)/q +1
        \ge \tfrac{n}{2} \cdot \delta_1^{\omm},
    \end{align*}
    using $k< n, n \ge 3$ and where $\delta_1^{\omm} = \ceil{8en\ln(n)/q}$ is the minimum permissible number of iterations spent in phase 1 in the proof for \omm.

    We thus obtain that with probability at least $1 - O(1/n)$ we have $T = O(\tau)$ and $F = O(\tau \log n)$.

    Concerning smaller values of $\tau$, we argue similarly as in the \omm case.
    The maximum population size $\Nmax$ is reached after $O(\tau + \log \Nmax)$ iterations and $O(\tau \log(\Nmax) + \Nmax)$ function evaluations. From this point on, the algorithm behaves exactly like the static \NSGAtwo with population size $\Nmax$.

    We can estimate the time then required to cover the Pareto front using, for example, a meta-theorem on MOEAs \cite{WiethegerD24arxiv}. We note that the there mentioned \emph{monotonicity property} is satisfied for the bi-objective \ojzjk by similar arguments as used for the proof of Lemma~\ref{lem:known_progress_intervals_ojzj}.
    We obtain that the number of function evaluations to cover the Pareto front with population size $\Nmax$ is at most $O(\Nmax n^k \log n)$ with probability at least $1-O(1/n)$ and at most $O(\Nmax n^k)$ in expectation.
    This matches established bounds for the static \NSGAtwo not using the current crowding distance \cite{DoerrQ23tec,DoerrQ23LB}. \qedhere

    \subsection{Proof of \Cref{thm:ojzj-extended-first-phase}}

    We base this analysis on the proof of \Cref{thm:uniform_growth_ojzj} and show that despite the weaker bound on $\tau$ here all phases are still sufficiently long for the arguments there to apply.
    As discussed in the proof of \Cref{thm:omm-extended-first-phase}, the length of phase~$0$ in the dynamic \NSGAtwo with longer initial phase is larger by a factor of $\log(\Nmax/4)$ compared to the dynamic \NSGAtwo. This compensates for the bound on the $\tau$ here being weaker by a factor of about $\frac{\ln n}{2} \le \log((n+4)/4)\le \log(\Nmax/4)$ for $n\ge 2$.

    For phases~$2$ and later, first assume $k\ge 3$. The length of phase $i$ (with population size $N_i$) is $\ceil{\tau/N_i} \ge 8en^3/(qN_i)$ and thereby larger than required, as described in the other proofs. A similar argument holds for phase~$1$ if $k\ge 3$. If instead $k \le 2$, we have a much better probability of making progress than assumed in Lemmas~\ref{lem:known_progress_intervals_ojzj}~and~\ref{lem:nsgaII_ojzj_inwards}: While we there estimate the probability to decrease the length of an interval with $\frac{q}{en}$, reducing the length of one of the outmost intervals happens with probability at least $\frac{q\binom{n}{2}}{en^k}$ by flipping any $k$ bits in the mutation of $1^n$ or $0^n$. For $k\le 2, n\ge 4$, this quantity is at least $\frac{3q}{8e}$. In any other interval progress happens with probability $\frac{q}{2e}$
    (same as for \omm), so our estimate in Lemmas~\ref{lem:known_progress_intervals_ojzj}~and~\ref{lem:nsgaII_ojzj_inwards} for the case of $k \le 2$ is too weak by a factor of at least $8n/3 \ge \ln n$ (for $n\ge 4)$, which compensates for weaker bound on $\tau$.
    Thereby, all arguments in the proof \Cref{thm:uniform_growth_ojzj} still apply. Accounting for the extended length of the first phase then yields the stated guarantees. \qedhere

    \section{Choosing the Phase Length Automatically}

    \subsection{Proof of Lemma~\ref{lem:concurrency}}

    Consider the beginning of the iteration of the \textbf{while}-loop in which this $r$-th evaluation occurred.
    Observe that, as $\phi_i >~0$, we have $\phasel(N_{\max}, i, \phi_i) \le r_i$, so
    \begin{equation*}
        r_i + \phasel(N_{\max}, i, \phi_i) \le 2r_i.
    \end{equation*}
    Thus, for all $j > \log(2r_i)$, instance~$A_j$ has not been initialized.
    For any $j \le \log(2r_i)$, consider the last iteration in which $A_j$ was selected, if ever.
    Let $r'_i, r'_j, \phi'_i, $ and $\phi'_j$ denote the respective values at the beginning of that iteration.
    Then, at that point, $A_j$ had received $r'_j$ evaluations and $A_i$ had received $r'_i \le r_i$ evaluations.
    As $A_j$ was picked,
    \begin{align*}
        r_j
         & = r'_j + \phasel(N_{\max}, j, \phi_j')    \\
         & \le  r'_i + \phasel(N_{\max}, i, \phi_i')
        \le 2r_i.
    \end{align*}

    Thus, the total number of function evaluations until the $r$-th function evaluation in $A_i$ was conducted is at most $2r\log(2r)$.

    \subsection{Proof of \Cref{thm:concurrent_omm_ojzj}}

    We first discuss the dynamic \NSGAtwo on \omm.
    Let $\tau \coloneqq 64e n \ln (n)/q$.
    By Theorem~\ref{thm:uniform_growth}, for every power of two $\tau'$ with $\tau' \ge \tau$ we have that in a run of Algorithm~\ref{alg:concurrent_opt} the dynamic \NSGAtwo instance $A_{\log \tau'}$ covers the Pareto front after receiving $O(\tau' \log n)$ function evaluations with probability $1-O(1/n)$.
    As for the smallest such $\tau'$ we have $\tau'=O(n\log n)$ and using Lemma~\ref{lem:concurrency}, $F_\omm=O(2\tau'\log(n) \log(2\tau'\log n)) = O(n\log^3 n)$ with probability at least $1-O(1/n)$.

    By Theorem~\ref{thm:omm-extended-first-phase}, using the same arguments for the dynamic \NSGAtwo with longer initial phase with
    $\tau \coloneqq \frac{256}{5}en /q = O(n)$, one of the instances covers the Pareto front after
    receiving
    $O\left(n \log(\Nmax) \log(n \log \Nmax)\right)$
    evaluations
    with probability at least $1-O(1/n)$.

    The same reasoning applies to \ojzjk using the bounds from \cref{thm:uniform_growth_ojzj,thm:ojzj-extended-first-phase}: For the smallest admissible $\tau$ for the dynamic \NSGAtwo we have
    $\tau=O(n^k \log n)$ which yields $O(\tau\log n)$ function evaluations in the critical run and $O(n^k \log^2(n) \log(n^k \log^2 n)) = O(k n^k \log^3 n)$ evaluations in total.
    Similar, for the  dynamic \NSGAtwo with longer initial phase we have $\tau=O(n^k)$ and obtain a total number of
    $O(\tau \log (\Nmax) \log(\tau \log \Nmax))
        = O(n^k \log (\Nmax) \log(n^k \log \Nmax))$
    function evaluations.

    For the expected value, when running \Cref{alg:concurrent_opt} with the dynamic \NSGAtwo on \omm, recall that, for $s\coloneqq \ceil{\log(64 e n \ln (n)/q)}$ and each $s'\in \N_{\ge s}$, instance $A_{s'}$ covers the Pareto front in $O(\log(n)2^{s'})$ function evaluations with probability at least $1-O(1/n)$.
    This event, for any fixed $s'$, corresponds to $O(2^{s'}\log(n)(s'+\log\log n))=O(2^{s'} s'\log n)$ function evaluations in total using Lemma~\ref{lem:concurrency}.
    For the ease of argument, consider a modified version of Algorithm~\ref{alg:concurrent_opt} that does not stop unless some instance $A_{s'}$ with $s'\ge s$ covers the Pareto front within the first $O(2^{s'} s'\log n)$ function evaluations. Clearly, the expected number of evaluations until this modified version terminates exceeds the one for the actual algorithm.
    Hence, the expected number of function evaluations is at most
    \begin{align*}
        O\left(\sum_{i=0}^\infty (O(1/n))^i \cdot  2^{s+i}(s+i)\log n \right) \\
        = O(2^s s\log n)
        = O(n\log^3 n).
    \end{align*}

    The same reasoning applies to the dynamic \NSGAtwo on \ojzjk, except that now  $s\coloneqq \ceil{\log(4 e (n^k + 8n^2) \ln (n)/q)}$. In this case, the  total expected number of evaluations is at most
    $O(2^s s\log n) = O(k n^k\log^3 n)$.

    For the dynamic \NSGAtwo with longer initial phase on \omm with $s\coloneqq \ceil{\log(\frac{256}{5} e n/q)}$, the desirable event for any instance $A_{s'}$ with $s'\ge s$ results in a total of
    $O(2^{s'}\log(\Nmax)\log(2^{s'}\log \Nmax))$ evaluations. The total expected number of evaluations is thus
    \begin{align*}
        O\bigg(\sum_{i=0}^\infty & (O(1/n))^i \cdot
        2^{s+i}\log(\Nmax)\log(2^{s+i}\log \Nmax) \bigg)                               \\
                                 & = O\left(2^{s}\log(\Nmax)\log(2^s\log \Nmax)\right) \\
                                 & = O(n\log(\Nmax)\log(n\log \Nmax)).
    \end{align*}
    Last, with $s \coloneqq \ceil{\log(4e(2n^k+8n^2)/q)}$,
    the expected number of evaluations for the dynamic \NSGAtwo with longer initial phase on \ojzjk is
    \begin{align*}
        O & \left(2^{s}\log(\Nmax)\log(2^s\log \Nmax)\right) \\
          & = O(n^k\log(\Nmax)\log(n^k\log \Nmax)).
    \end{align*}

\end{document}